\newtheorem{theorem}{Theorem}
\newcommand{\by}{\boldsymbol{y}}
\def\by{\mathbf{y}}
\def\CFP{1}   
\def\CNN{C_{Med}}    
\def\CMN{C_{Lg}}         
\title{Dimension Reduction with Locally Adjusted Graphs}
\author {
    Yingfan Wang\equalcontrib, Yiyang Sun\equalcontrib, Haiyang Huang\equalcontrib, Cynthia Rudin
}
\begin{document}

\maketitle

\begin{abstract}
Dimension reduction (DR) algorithms have proven to be extremely useful for gaining insight into large-scale high-dimensional datasets, particularly finding clusters in transcriptomic data. The initial phase of these DR methods often involves converting the original high-dimensional data into a graph. In this graph, each edge represents the similarity or dissimilarity between pairs of data points. However, this graph is frequently suboptimal due to unreliable high-dimensional distances and the limited information extracted from the high-dimensional data. This problem is exacerbated as the dataset size increases. 
If we reduce the size of the dataset by selecting points for a specific sections of the embeddings, the clusters observed through DR are more separable since the extracted subgraphs are more reliable. 
In this paper, we introduce LocalMAP, a new dimensionality reduction algorithm that dynamically and locally adjusts the graph to address this challenge. By dynamically extracting subgraphs and updating the graph on-the-fly, LocalMAP is capable of identifying and separating real clusters within the data that other DR methods may overlook or combine. We demonstrate the benefits of LocalMAP through a case study on biological datasets, highlighting its utility in helping users more accurately identify clusters for real-world problems.

\begin{links}
    \link{Code}{https://github.com/williamsyy/LocalMAP}
\end{links}
\end{abstract}

\section{Introduction}

Dimension reduction (DR) is an important data visualization strategy for understanding the structure of complicated high-dimensional datasets. DR is used extensively in image, text, and biomedical datasets, particularly -omics \cite{cao2019single,becht2019dimensionality, amezquita2020orchestrating, dries2021giotto, atitey2024model, bohm2022unsupervised, mu2017all,raunak2019effective}.
Beginning in the original high-dimensional space, DR methods typically first abstract the data into a \textit{graph}, with each edge representing either similarity (positive edge) or dissimilarity (negative edge). This graph is then used to optimize the low-dimensional embedding by applying attractive forces along the positive edges and repulsive forces along the negative edges. Clearly, the quality of the graph, which connects the original high-dimensional data to the low-dimensional embedding, is critical to this process.  However, given the complex nature of high-dimensional data, it is challenging to construct a graph that accurately represents all patterns and structures within the data, and graphs are usually sub-optimal.

In this work, we provide two key insights as to why the graphs could be flawed. First, we found that high-dimensional distances become less informative due to the curse of dimensionality, where measurements of similarity and dissimilarity determined using high-dimensional distances do not necessarily reflect similarities and dissimilarities along the data manifold. This issue is more pronounced in DR methods that select a small group of nearest neighbors (NNs) to form positive edges of the graph and apply strong attractive forces along these edges. When a ``false'' positive edge is constructed based on an inaccurate similarity measure between a pair of points that are actually dissimilar, strong attractive forces will pull them closer. If there are many such false positive edges, the DR method will generate overlapping clusters without clear boundaries, even if the data have distinct clusters. Since DR methods are unsupervised, the user would not be able to determine that distinct clusters exist -- they would be blurred together in the DR result. As we show in this work, eliminating these false positive edges can dramatically improve DR projections.

Our second insight is that missing edges (lack of negative ``further pair'' edges between far points in high-dimensional space) also contribute to unwanted overlapping of clusters, since such edges help to define boundaries between clusters. However, as the scale of the dataset increases, the set of negative edges become both insufficient and less effective, as discussed in Section \ref{sec:insufficient_FP}. Some of the most important applications of DR (single cell, transcriptomics) generate large high-dimensional datasets with many clusters, so it is important that there are enough further pair (FP) edges to separate them.  

Based on the two insights discussed above, we propose LocalMAP, a new DR algorithm that dynamically and locally adjusts the graph \textit{during dimension reduction} to address the aforementioned issues with graph construction. LocalMAP \textit{detects false positive edges and removes them}, using ideas similar to outlier detection in robust statistics. This allows clusters to separate. It also \textit{adds more further pair edges} dynamically, allowing crisper cluster boundaries to appear. Together, these ideas within LocalMAP produce clear, crisp separated clusters where other methods fail.

Figure \ref{fig:blurred_DR} shows the results of several high-quality DR approaches, including t-SNE \cite{van_der_Maaten08}, UMAP \cite{UMAP} and PaCMAP \cite{pacmap}, on the MNIST dataset where there are 10 separated clusters in the high-dimensional space. Algorithms t-SNE, UMAP and PaCMAP generate DR embeddings without clear boundaries between clusters, while LocalMAP generates a high-quality DR embedding with well-defined boundaries that are visible even without class information provided to the algorithm.

\begin{figure}[ht]
\centerline{\includegraphics[width=1.2\columnwidth]
{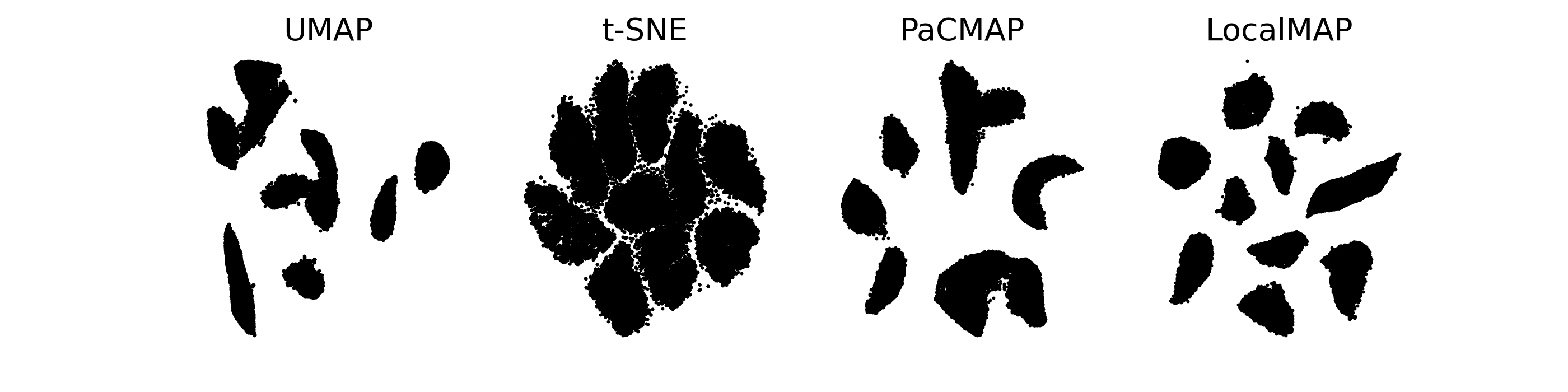}}
\caption{DR embeddings on MNIST dataset which contains $10$ digit classes. Our LocalMAP method is on the right. The colored embeddings with true labels are shown in Figure \ref{fig:case_study}.}
\label{fig:blurred_DR}
\end{figure}

In this work, we provide more evidence for the insights discussed above, in terms of both theory based on simple clustering models and empirical evidence. We also provide LocalMAP and a comprehensive evaluation of it, based on \citet{huang2022towards}. 
We provide case studies showing that LocalMAP is able to find true clusters -- and not false clusters -- more reliably than other DR approaches. 

\section{Related Work}
\label{sec:related_work}

While DR methods that preserve \textit{global structure} date back to 1901 with PCA \cite{Pearson01}, multidimensional scaling \cite{Torgerson52}, and non-negative matrix factorization \cite{lee1999learning}, methods that preserve \textit{local structure} have become indispensable and ubiquitous in numerous applications, particularly in -omics research. Local methods include Isomap \cite{Tenenbaum00}, Local Linear Embedding (LLE) \cite{Roweis00}, Laplacian Eigenmap \cite{Belkin01}, and more recent Neighborhood Embedding (NE) algorithms like t-SNE \cite{van_der_Maaten08}, LargeVis \cite{Tang16}, UMAP \cite{UMAP}, PaCMAP \cite{pacmap}, TriMAP \cite{TriMAP}, NCVis \cite{artemenkov2020ncvis}, h-NNE \cite{sarfraz2022hnne}, Neg-t-SNE \cite{damrich2023from} and InfoNC-t-SNE \cite{damrich2023from} and many others \cite{van2022probabilistic, zu2022spacemap}. Global methods typically are not able to preserve separations between clusters or clearly display the data manifold, whereas local methods can sometimes do so; thus, they provide a unique perspective on the data that is difficult to gain in any other way.

Because DR methods are unsupervised, there is no common objective function like there would be for supervised learning (e.g., classification error). However, there are principles that reliable DR loss functions typically obey \cite{pacmap}. We will discuss those in Appendix C, as LocalMAP obeys these principles based on what it inherits from PaCMAP.

There are works that discuss how DR methods' behavior is affected by different components of the DR algorithm. For example, there are several papers discussing the challenges of tuning parameters and applying these methods in practice 
\cite{wattenberg2016use,cao2017automatic,nguyen2019ten,Belkina19OptSNE, kobak2021initialization}. \citet{pacmap, Unify} also discuss how changing different graph components and loss functions affect DR methods' behavior. 

\textit{None of the above approaches adjusts the graph itself during DR optimization.} The graph is typically considered to be fixed as ground truth; the graph information is analogous to the labels for classification tasks that are also considered ground truth. However, recent work has found improvements in classification performance by identifying possibly mislabeled points and omitting them \cite{INCV2019,coteaching, northcutt2021pervasive}, showing that there is value in identifying and removing labels that appear to be wrong \textit{during the classification process}. There is also a field of robust statistics that identifies and omits outliers that would wreck performance, and aims to ensure results are robust to assumptions on the data distribution \citep{MartinRobustStatsBook2018,li2020dividemix}. Our work is thus unique in extending these types of idea to DR, and not trusting every graph element as if it were correct. As discussed, we know the graph is probably wrong when the Euclidean distance is not the same as the distance along the data manifold (geodesic distance).

\section{Review of PaCMAP}
The proposed LocalMAP algorithm starts from a (faulty) embedding that is already formed. We first review PaCMAP since LocalMAP can start after its first two phases.
Consider a dataset \( \mathbf{X} = \{\mathbf{x}_1, \mathbf{x}_2, \ldots, \mathbf{x}_n\} \) consisting of \( n \) points in a high-dimensional space. PaCMAP seeks to find a low-dimensional embedding \( \mathbf{Y} = \{\mathbf{y}_1, \mathbf{y}_2, \ldots, \mathbf{y}_n\} \), where each \( \mathbf{y}_i \) corresponds to \( \mathbf{x}_i \). PaCMAP first constructs a graph in high-dimensional space with three kinds of pairs: NN pairs (near neighbors in the high-dimensional space), MN pairs (mid-near pairs, close but not as close as neighbors), and FP pairs (further point pairs, far in the high-dimensional space). Optimization of the low-dimensional embedding $\mathbf{Y}$ is performed using a simple objective:
\begin{equation}
\begin{aligned}
    &Loss^{\text{PaCMAP}} = w_{\text{NN}}\cdot\sum_{(i,j) \text{: NN}}\frac{\tilde{d}_{ij}}{\CNN + \tilde{d}_{ij}} + \\ &w_{\text{MN}}\cdot\sum_{(i,k) \text{: MN}}\frac{\tilde{d}_{ik}}{\CMN+ \tilde{d}_{ik}}
     + w_{\text{FP}}\cdot\sum_{(i,l) \text{: FP}}\frac{1}{\CFP + \tilde{d}_{il}}
\end{aligned}
\end{equation}
where $\tilde{d}_{ij}=d_{ij}^2+1=\|\by_{i} - \by_{j}\|^2 + 1$. $\CMN$ and $\CNN$
are nontunable parameters controlling the scale of the embedding, set at $\sim$10K and $\sim$10. The weights $w_{\text{NN}}$, $w_{\text{MN}}$, and $w_{\text{FP}}$ are adjusted to balance attraction and repulsion.
Neither the weights nor the parameters should be modified by users; they perform well across datasets \citep{pacmap}.

\section{Dynamically and Locally Adjusted Graph}

\subsection{Insight 1: False positive nearest neighbor edges pull nearby clusters together, losing clear boundaries between them}

Let us consider an experiment to illustrate this. DR methods apply attractive forces along the NN edges (between points that are close in the high-dimensional space) and apply repulsive forces along FP edges (points that are far in the high-dimensional space). 
An NN edge between two points is \textit{preserved} if these points are placed nearby in the low-dimensional embedding. 
Given the limited capacity of low-dimensional spaces and the complex nearest-neighbor relationships in high-dimensional datasets, it is clear that not all NN edges can be preserved during dimensionality reduction. Further, we do not want to preserve all high-dimensional NNs, 
since the Euclidean distance neighbors are not always the true neighbors along the actual data manifold. Therefore, the question becomes how to identify which NNs to preserve.

The preservation of specific NN edges depends on intricate dynamics during the embedding optimization, which utilizes the graph extracted from the high-dimensional data. For a pair of NNs $(i,j)$, if they share a greater number of common neighbors, which provide attraction along NN paths, it is more likely that these two points will be positioned close to each other in the low-dimensional space. We hypothesize that these dynamics sometimes correct erroneous edges generated by unreliable high-dimensional distances. If true, this hypothesis implies we can adjust the graph dynamically based on this information and further optimize the embedding using the revised graph.

\begin{figure}[ht]
\begin{center}
\centerline{\includegraphics[width=\columnwidth]
{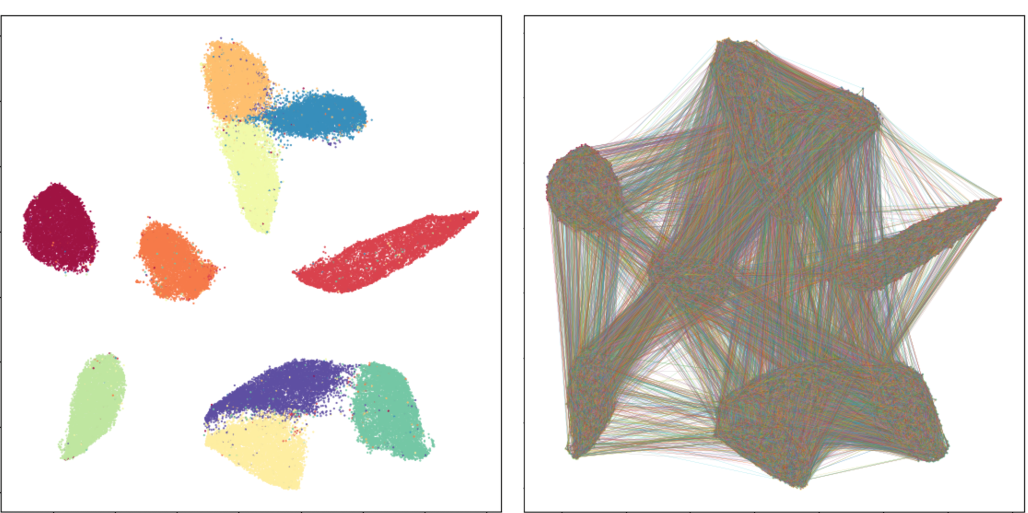}}
\caption{Visualization of NN edge connections of PaCMAP embedding on MNIST dataset.}
\label{fig:MNIST_connect}
\end{center}
\end{figure}

Figure~\ref{fig:MNIST_connect} presents a PaCMAP embedding of the MNIST dataset on the left and the corresponding NN edges' visualization on the right. The figure highlights a substantial number of NN edges bridging distinct clusters that are widely separated. When two points connected by such an NN edge are distant in the final embedding, it suggests that the dynamics of the optimization process have separated them, indicating that the NN edge might be erroneous. Despite this, the erroneous NN edge continues to exert an attractive force, pulling these two points towards each other. If numerous such NN edges exist between two clusters, they can lead to clusters being falsely connected, rather than clearly delineated with distinct boundaries.

\subsection{Insight 2: Insufficient and ineffective negative further point edges fail to create clear boundaries between nearby clusters, particularly in large datasets}
\label{sec:insufficient_FP}

Ideally, the repulsive forces along the FP edges should separate nearby clusters. However, when the dataset becomes larger, with more clusters and larger samples, it is more likely that the sampled FP edges in DR algorithms are insufficient. 

\begin{theorem}
\label{thm:thm1}
Assume all points between two clusters are approximately equidistant, so that the probability of constructing a positive pair between points from these clusters is constant. The ratio between the number of NN edges to the number of FP edges of PaCMAP between two clusters increases with the number of data points in a dataset.
\end{theorem}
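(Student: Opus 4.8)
The plan is to compute the expected number of cross-cluster NN edges and the expected number of cross-cluster FP edges separately, each by linearity of expectation, and then show that their ratio grows linearly in $n$. Write $A$ and $B$ for the two clusters, with $|A| = n_A$ and $|B| = n_B$, and let $n$ be the total number of points; recall that PaCMAP assigns each point a fixed budget of near-neighbor edges and a fixed budget $n_{\text{FP}}$ of further-pair edges, so the asymmetry I will exploit is that NN edges are chosen by \emph{distance} whereas FP edges are chosen by \emph{uniform sampling}. For the NN count, the hypothesis that all cross-cluster points are approximately equidistant is exactly what lets me treat each of the $n_A n_B$ pairs $(i,j)$ with $i \in A,\ j \in B$ symmetrically: every such pair becomes a positive (NN) pair with one and the same probability $p$, and crucially $p$ does not depend on $n$. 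Summing the indicator over all cross pairs gives
\begin{equation}
\mathbb{E}\!\left[\#\mathrm{NN}(A,B)\right] = p\, n_A n_B .
\end{equation}

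Next I would count the FP edges. Each point draws $n_{\text{FP}}$ further pairs uniformly from the points that are not among its near neighbors; since the neighbor set is a vanishing fraction of the data, this is essentially a uniform draw over all $n$ points. Hence an FP edge emanating from a point of $A$ lands in $B$ with probability $\approx n_B/n$, and summing over the $n_A$ points of $A$ together with the symmetric contribution from $B$ yields
\begin{equation}
\mathbb{E}\!\left[\#\mathrm{FP}(A,B)\right] = \Theta\!\left(\frac{n_{\text{FP}}\, n_A n_B}{n}\right).
\end{equation}

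Finally I would take the ratio: the common factor $n_A n_B$ cancels, leaving
\begin{equation}
\frac{\mathbb{E}[\#\mathrm{NN}(A,B)]}{\mathbb{E}[\#\mathrm{FP}(A,B)]} = \Theta\!\left(\frac{p}{n_{\text{FP}}}\, n\right),
\end{equation}
which increases in $n$ because $p$ and $n_{\text{FP}}$ are constants. The qualitative heart of the argument is that uniform FP sampling dilutes the cross-cluster FP count by the factor $n_B/n$, while the distance-based NN mechanism suffers no such dilution; the cancellation of $n_A n_B$ also means the conclusion does not depend on how the two cluster sizes scale relative to $n$.

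I expect the main obstacle to be formalizing the equidistance hypothesis so that it is consistent with PaCMAP's \emph{fixed} per-point neighbor budget: taken literally, a constant pairwise probability $p$ would let the expected number of cross-cluster neighbors per point grow with $n_B$ and eventually exceed that budget. The clean way around this is to keep the entire argument at the level of expectations and to observe that the precise value of the $n_A n_B$ factor is irrelevant since it cancels in the ratio; one needs only that NN counts scale like $n_A n_B$ while FP counts scale like $n_A n_B / n$. A secondary point to treat carefully is the ``not a near neighbor'' restriction in FP sampling and any double counting of undirected edges, but because near neighbors are a negligible fraction of the $n$ points and double counting contributes only a constant factor, neither affects the $\Theta(n)$ conclusion.
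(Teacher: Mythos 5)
Your proposal is correct and follows essentially the same route as the paper's proof: linearity of expectation gives $\mathbb{E}[\#\mathrm{NN}] = p\,n_A n_B$, uniform FP sampling gives $\mathbb{E}[\#\mathrm{FP}] = 2 n_A n_B n_{\text{FP}}/n$, and the $n_A n_B$ factors cancel in the ratio, leaving growth linear in $n$. Your closing remarks on the fixed neighbor budget and the non-neighbor restriction in FP sampling are careful additions the paper does not address, but they do not change the argument.
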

\begin{proof}
Consider a dataset with $n$ data points distributed across $m$ clusters $C_1,C_2,...,C_m$, where each cluster $C_i$ contains $n_i$ data points $(n_1+n_2+...+n_m=n)$. For any two clusters $C_i$ and $C_j$, by assumption, we have:
$$
\forall x_i \in C_i, x_j\in C_j,\quad P(x_i,x_j \text{ are NNs})=p_{ij} 
$$
where $p_{ij}$ is constant. 

FP edges for a given point are sampled randomly from all non-NN points. For each point, $n_{FP}$ FP points are randomly selected, where $n_{FP}$ is a constant defaulting to $20$ for PaCMAP. Thus, the expected number of NNs and FPs between $C_i$ and $C_j$ are
\begin{equation}
\begin{aligned}
&\mathbb{E}(\# \text{ of NNs between $C_i$, $C_j$}) = n_i n_j p_{ij}\\
& \mathbb{E}(\# \text{ of FPs between $C_i$, $C_j$}) = \frac{2 n_i n_j n_{\text{FP}}}{n},
\end{aligned}
\end{equation}
because each point $i\in C_i$ selects $\frac{n_j}{n}\cdot n_{\text{FP}}$ FP pairs, the total number of points in $C_i$ sampled from $C_i$ to $C_j$ is $\frac{n_i \cdot n_j \cdot n_{\text{FP}}}{n}$ FP pairs. Similarly, $\frac{n_i \cdot n_j \cdot n_{\text{FP}}}{n}$ FP pairs are sampled from all points in $C_j$ between $C_i$ and $C_j$. Therefore, $\frac{2n_i n_j \cdot n_{\text{FP}}}{n}$ total FP pairs are sampled between these two clusters.
Therefore, the ratio between the number of NN edges and the number of FP edges is
$$
\frac{\mathbb{E}(\# \text{ of NNs between $C_i$, $C_j$})}{\mathbb{E}(\# \text{ of FPs between $C_i$, $C_j$})} = \frac{n \cdot p_{ij}}{2 n_{\text{FP}}}.
$$
Considering that $p_{ij}$ is unaffected by $n$ and $n_{\text{FP}}$, $n_i$ and $n_j$ are constants, the ratio increases with $n$. This result is true for all clusters $C_i$ and $C_j$. Thus, for each pair of clusters, the ratio of NN edges to FP edges grows linearly in $n$. This completes the proof.
\end{proof}

This phenomenon is further illustrated in Figure~\ref{fig:MNIST_part}. When DR is applied to the entire MNIST dataset, images of six different digits are grouped into two large clusters, with each cluster containing three digit classes. However, when DR is applied separately to each of these large clusters, they are successfully separated into distinct clusters, each representing a single digit class. It shows that when the sample size increases, usually more structure is involved in the dataset, which indicates higher complexity and larger number of classes.

\begin{figure}[ht]
\begin{center}
\centerline{\includegraphics[width=\columnwidth]
{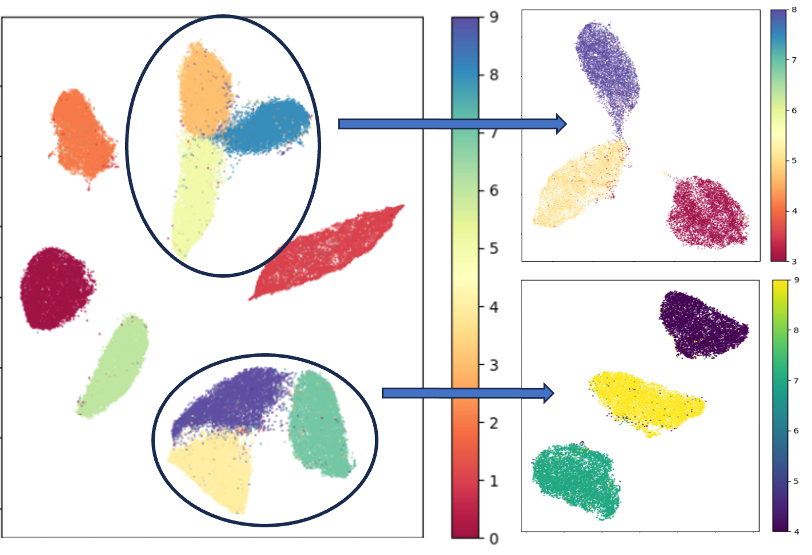}}
\caption{\textit{Left:} PaCMAP embedding on the entire MNIST dataset where six clusters are groups into two large ones (each with three digit classes). \textit{Right}:  PaCMAP embeddings on each of the two groups of three digit classes. The right embeddings work when the left do not because the partial datasets are smaller and thus do not suffer from the problem identified in Insight 2.}
\label{fig:MNIST_part}
\end{center}
\end{figure}


\section{LocalMAP}
\label{sec:LocalMAP}
The insights above suggest a new approach to DR that keeps information \textit{local}, so as to mitigate the problem of Insight 2, and to decrease the impact of false positive edges, avoiding the pitfall of Insight 1. The way LocalMAP handles this is to \textit{replicate a small-scale DR within its large-scale computation.} Specifically, we increase both NN attractive forces and FP repulsive forces locally compared to standard DR approaches, adjusting weights \textit{dynamically} as we learn more about which data points are false positives. 

This approach has few downsides, if any. Because it sees more local information, and because it reduces the impact of false positive edges, LocalMAP is able to better capture local structure. It is slightly more computationally expensive than some of the  regular DR approaches and faster than others, even for large datasets, 
as we show in Section \ref{sec:runtime}.


\subsection{LocalMAP Algorithm}

As discussed, LocalMAP handles Insight 1 and 2 by increasing local computation. LocalMAP is shown in Algorithm \ref{alg:LocalMAP} with a detailed version in Appendix A. Major differences from previous DR approaches are:

\paragraph{LocalMAP Computation 1.} Adjusting the weights for NN edges dynamically and locally according to the low-dimensional distance during optimization. This is done according to a set of principles listed below. 

\paragraph{LocalMAP Computation 2.} Resampling FP edges to be local. This allows LocalMAP to separate nearby clusters. In practice, local FP edges are randomly selected non-neighbor pairs with distance no larger than the average low-dimension distance among all nearest clusters pairs; we call this the \textit{proximal cluster distance commons} $\bar d_{\text{adj}}$.

Both LocalMAP Computation 1 and LocalMAP Computation 2 apply to the final stage of optimization. LocalMAP uses earlier stages from another DR approach to 
handle global structure placement, which is sufficient as set up for LocalMAP's computations.

\begin{algorithm*}
    \caption{Implementation of LocalMAP}
    \label{alg:LocalMAP}
    \begin{algorithmic}

    \REQUIRE  $\mathbf{X}$ - data matrix, $\bar d_{\text{adj}}$ from Section \ref{sec:LocalMAP}, parameter $\CNN$$\sim$10. 
    \ENSURE\quad
        
        $\bullet$ Initialized low-dimensional embedding $\mathbf{Y}$. 
        
    \STATE $\bullet$ Construct neighbor pairs, mid-near pairs and further pairs according to high-dimensional distance.

    \STATE $\bullet$ Optimize $\mathbf{Y}$ using the first two training phases of PaCMAP.

    \STATE $\bullet$ Optimize $\mathbf{Y}$ using loss:
     $$ Loss (\mathbf{Y}) := \sum_{(i,j) \text{: NN}}\frac{\bar d_{\text{adj}} \cdot \sqrt{\tilde{d}_{ij}}}{2(\CNN + \tilde{d}_{ij})} + \sum_{(i,l) \text{: FP}}\frac{1}{\CFP + \tilde{d}_{il}},
    $$
    where  $\tilde{d}_{ij}=\|\by_{i} - \by_{j}\|^2 + 1$.\\
        The FP pairs are resampled every $10$ iterations to stay \textit{local} 
        so that all FPs satisfies $\|\by_i-\by_l\|\leq \bar d_{\text{adj}}, \forall (i,l) \in \textit{FP pairs}$.
    
    \STATE \textbf{return} $\mathbf{Y}$
    \end{algorithmic}
\end{algorithm*}

\subsection{LocalMAP's principles for NN edge weighting in LocalMap Computation 1}

LocalMAP creates several central aspects for DR. Specifically, these principles are:
\begin{enumerate}
    \item Increased weights for NNs that are close in low-dimensional space. (These are estimated to be true positive pairs.)
    \item Decreased weights for NNs that are far in low-dimensional space. (These are estimated to be false positive pairs.)
    \item Avoid extremely large forces, ensuring stable convergence. 
    \item The weighting function needs to be simple, easy to combine with the NN loss terms, and fast to compute.
\end{enumerate}

\begin{figure}[htb] 
\begin{center}
\centerline{\includegraphics[width=.35\textwidth]{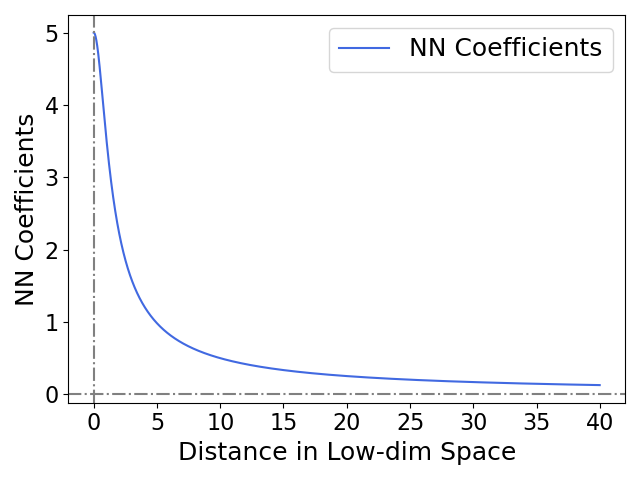}}
\caption{Curve of $\textrm{Coefficient}_{\text{NN}}$. 
}
\label{fig:NN_coef}
\end{center}
\end{figure}

A natural choice to achieve Principles 1 and 2 and 4 is to use a weighting function that is inversely proportional to the low-dimensional distance. However, this could lead to very large values for small low-dimensional distances, making the convergence unstable, violating Principle 3.

Moreover, to define ``close'' in Principles 1 and 2, the embedding scale should be considered. We consider the average distance between adjacent clusters in PaCMAP embedding, $\bar d_{\text{adj}}$, which is approximately 10 based on observations. The midpoint between the two clusters ($\frac{\bar d_{\text{adj}}}{2}$) could serve as a threshold to determine whether to increase or reduce the attractive force. The coefficient should be greater than $1$ when the attraction force needs to be increased and less than $1$ when it needs to be reduced.

To achieve all four principles, we choose a weighting function as follows:
$$
\textrm{Coefficient}_{\text{NN}}(d_{ij}) = \frac{\frac{\bar d_{\text{adj}}}{2}}{\sqrt{d_{ij}^2 + 1}} = \frac{\bar d_{\text{adj}}}{2\sqrt{\tilde{d}_{ij}}}.
$$
Based on the above equation, when $\frac{\bar d_{\text{adj}}}{2} > \sqrt{\tilde{d}_{ij}} \approx d_{ij}$, the attractive force along the $(i,j)$ pair increases, and this force would decrease when $\frac{\bar d_{\text{adj}}}{2} < \sqrt{\tilde{d}_{ij}} \approx d_{ij}$.

Figure \ref{fig:NN_coef} shows how $\textrm{Coefficient}_{\text{NN}}$ changes with the low dimensional distance between pairs of NN. Implementing this by adapting PaCMAP's loss and setting $\CNN$ to 10 to fix the scale of the embedding, its NN loss term becomes:

\begin{equation}
\begin{aligned}
&Loss_{\text{NN}} = w_{\text{NN}}\cdot\sum_{(i,j) \text{: NN}}\frac{\tilde{d}_{ij}}{\CNN + \tilde{d}_{ij}} \cdot \frac{\bar d_{\text{adj}}}{2\sqrt{\tilde{d}_{ij}}} \\
&= w_{\text{NN}}\cdot\sum_{(i,j) \text{: NN}}\frac{\bar d_{\text{adj}} \cdot \sqrt{\tilde{d}_{ij}}}{2(\CNN + \tilde{d}_{ij})}.
\end{aligned}
\end{equation}

As stated in Section \ref{sec:related_work}, DR methods are unsupervised and no common objective function exists; we proved in Theorem \ref{thm:principles} that LocalMAP also satisfies the six principles mentioned in \citet{pacmap}.


\begin{theorem}\label{thm:principles}
LocalMAP's loss function obeys the six principles of \citet{pacmap} for any choices of $\bar d_{\text{adj}}$, excluding NNs that have low dimensional distances larger than a threshold with value $\sqrt{\CNN-1}$ (i.e., possible false positives), where we set $\CNN$$\sim$10 across datasets to determine the scale of the embedding.
\end{theorem}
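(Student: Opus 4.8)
The plan is to verify the six PaCMAP principles by treating them as they are stated in \citet{pacmap}: local conditions on the sign, the monotonicity, and the limiting behavior of the per-pair gradient of the loss with respect to the squared low-dimensional distance $\tilde d_{ij} = \|\by_i - \by_j\|^2 + 1$. First I would split LocalMAP's loss into its repulsive (FP) part and its attractive (NN) part. The FP term $\frac{1}{\CFP + \tilde d_{il}}$ is identical to PaCMAP's further-pair term, so every principle constraining the repulsive force is inherited verbatim and needs no new argument. This reduces the theorem to showing that the reweighted NN term $\ell_a(\tilde d) = \frac{\bar d_{\text{adj}}\sqrt{\tilde d}}{2(\CNN + \tilde d)}$ obeys the principles governing the attractive force on the correct domain.

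The core step is to differentiate $\ell_a$. A direct computation gives $\frac{\partial \ell_a}{\partial \tilde d_{ij}} = \frac{\bar d_{\text{adj}}}{4}\cdot\frac{\CNN - \tilde d_{ij}}{(\CNN + \tilde d_{ij})^2\,\sqrt{\tilde d_{ij}}}$, whose sign is governed entirely by the factor $\CNN - \tilde d_{ij}$, since $\bar d_{\text{adj}}>0$ and all other factors are positive (and the sign with respect to $d_{ij}$ agrees with the sign with respect to $\tilde d_{ij}$). Hence the NN term is genuinely attractive -- the loss increases with distance, so gradient descent pulls the pair together -- exactly when $\tilde d_{ij} < \CNN$, that is when $d_{ij} < \sqrt{\CNN - 1}$, and it reverses to a repulsive behavior beyond that distance. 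This is precisely the threshold in the statement: the NN pairs with low-dimensional distance exceeding $\sqrt{\CNN - 1}$ are exactly those on which the attraction principle would fail, so excluding them (as likely false positives) restores the principle on the remaining domain $1 \le \tilde d_{ij} < \CNN$. Because $\bar d_{\text{adj}}$ enters only as a positive multiplicative constant, it scales the force magnitude but never changes the sign or the shape of the gradient; this is what makes the conclusion hold \emph{for any} choice of $\bar d_{\text{adj}}$.

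With the sign settled, I would read the remaining principles off the same gradient expression restricted to $[1,\CNN)$: at $d_{ij}=0$ (so $\tilde d_{ij}=1$) the gradient equals $\frac{\bar d_{\text{adj}}(\CNN-1)}{4(\CNN+1)^2}$, which is finite, so the attractive force stays bounded near zero distance (the stability / no-blow-up principle), and the gradient is positive throughout and vanishes as $\tilde d_{ij}\to\CNN^-$. Principles phrased as large-distance limits are handled by the exclusion itself, since beyond the threshold the pair is removed and the condition becomes vacuous or is subsumed by the FP analysis. I expect the main obstacle to be the bookkeeping of matching each of the six principles to the correct property of $\ell_a$ and, in particular, checking any principle that asks for a specific monotone profile of the attractive force on $[1,\CNN)$: the extra $\sqrt{\tilde d}$ factor makes the attractive gradient non-monotone in a way PaCMAP's original term is not, which requires a careful sign-of-derivative argument -- and it is exactly this non-monotonicity that creates the threshold in the first place.
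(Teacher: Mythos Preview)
Your decomposition and sign analysis are exactly what the paper does: split off the FP term (identical to PaCMAP, so inherited), differentiate the NN term, and read the threshold $d_{ij}=\sqrt{\CNN-1}$ from the factor $\CNN-\tilde d_{ij}$. Two points, however, separate your sketch from a complete proof.

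First, the paper invokes Proposition~1 of \citet{pacmap}, whose hypotheses are phrased in the variable $d_{ij}$, not $\tilde d_{ij}$: one needs $f(d_{ij}):=\partial Loss_{\text{attractive}}/\partial d_{ij}$ to be non-negative, \emph{unimodal}, and to satisfy $\lim_{d_{ij}\to 0}f=\lim_{d_{ij}\to\infty}f=0$. Because you differentiate in $\tilde d_{ij}$, you find a finite nonzero value at $d_{ij}=0$ and call this ``bounded''; but the actual requirement is $f(0)=0$, and it is the chain-rule factor $\partial\tilde d/\partial d=2d_{ij}$ that delivers this. Working in $d_{ij}$ from the start gives
\[
f(d_{ij})=\frac{\bar d_{\text{adj}}\, d_{ij}\,(\CNN-d_{ij}^2-1)}{2(d_{ij}^2+1)^{1/2}(\CNN+d_{ij}^2+1)^2},
\]
from which both endpoint limits on $[0,\sqrt{\CNN-1}]$ are immediate.

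Second, and more substantively, the non-trivial content of the paper's proof is precisely the step you flag as an ``obstacle'' and then leave open: showing $f$ is unimodal on $(0,\sqrt{\CNN-1})$. The paper computes $f'(d_{ij})$, observes its sign is governed by the polynomial $\Delta=2d_{ij}^6+(3-6\CNN)d_{ij}^4-6\CNN d_{ij}^2+\CNN^2-1$, substitutes $t=d_{ij}^2$ to obtain a cubic, and uses sign evaluations at $t=0$ and $t=\CNN-1$ together with the end behavior to locate exactly one root of $\Delta$ in $(0,\sqrt{\CNN-1})$, hence exactly one sign change of $f'$. Without this argument the unimodality hypothesis of Proposition~1 is not verified, and the six principles do not follow. (Also, a small conceptual slip: the threshold $\sqrt{\CNN-1}$ arises from the \emph{sign} flip of $f$, i.e.\ from non-negativity failing, not from non-monotonicity; unimodality is a separate condition that must still be checked on the restricted interval.)
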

The proof is given in Appendix C.

\section{Case Study}
Figure \ref{fig:case_study} shows the results for several DR methods on three datasets. Two of the datasets are handwritten digit datasets \cite{lecun2010mnist,USPS}, which means that there are distinct clusters in high dimensions, but also these datasets are special in that Euclidean distance (which is used for defining the graph for DR) is not the same as the geodesic distance along the data manifold, meaning there will be plenty of false positive edges, similar to those shown in Figure \ref{fig:MNIST_connect}. The last dataset is a biological dataset \cite{kang2018multiplexed} that contains multiple cell types that are labeled.

We observe that \textit{LocalMAP does a far better job in separating clusters on all three datasets}. 
A quantitative result is given by the Silhouette score shown in the figures. The definition of Silhouette score is in Appendix E. LocalMAP's scores are superior, confirming what we see visually in these DR plots. There are 5 additional datasets analyzed in Section \ref{sec:experiment} and and visualized in Appendix H, all with similarly impressive visual results. 

\begin{figure}[ht]
\begin{center}
\centerline{\includegraphics[width=\columnwidth]
{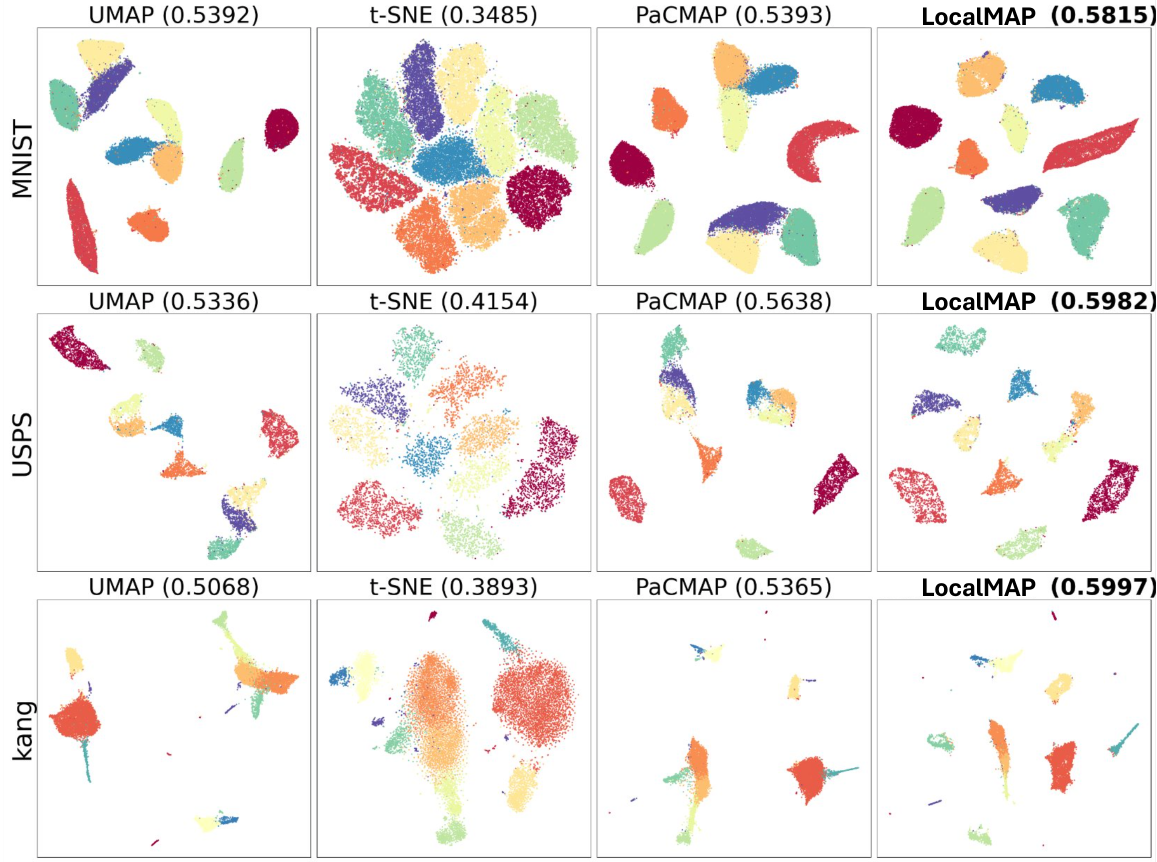}}
\caption{Case study on MNIST \cite{lecun2010mnist}, USPS \cite{USPS} and Kang \cite{kang2018multiplexed}. The Silhouette scores are shown in parentheses.}
\label{fig:case_study}
\end{center}
\end{figure}

\section{Ablation Study}

To understand how each part of the modification contributes to the DR results — an adjustment of NN edge weights and local FP edge resampling — we conducted an ablation study, as shown in Figure~\ref{fig:ablation_MNIST}. The left plot in the figure displays the embedding generated by LocalMAP with adjusted NN edge weights but without local FP resampling. In this embedding, the clusters have higher density, yet the separation between nearby clusters remains inadequate. The right plot illustrates the embedding generated by LocalMAP with local FP resampling but without adjusted NN edge weights. Although the clusters are more dispersed, the separation between nearby clusters is still insufficient. Therefore, these results indicate that both NN edge weight adjustment and local FP edge resampling are necessary to achieve clear separation between clusters. These observations are clear with MNIST, which is why it is useful to work with this dataset; the same observations persist with other datasets.

\begin{figure}[ht]
\begin{center}
\centerline{\includegraphics[width=\columnwidth]
{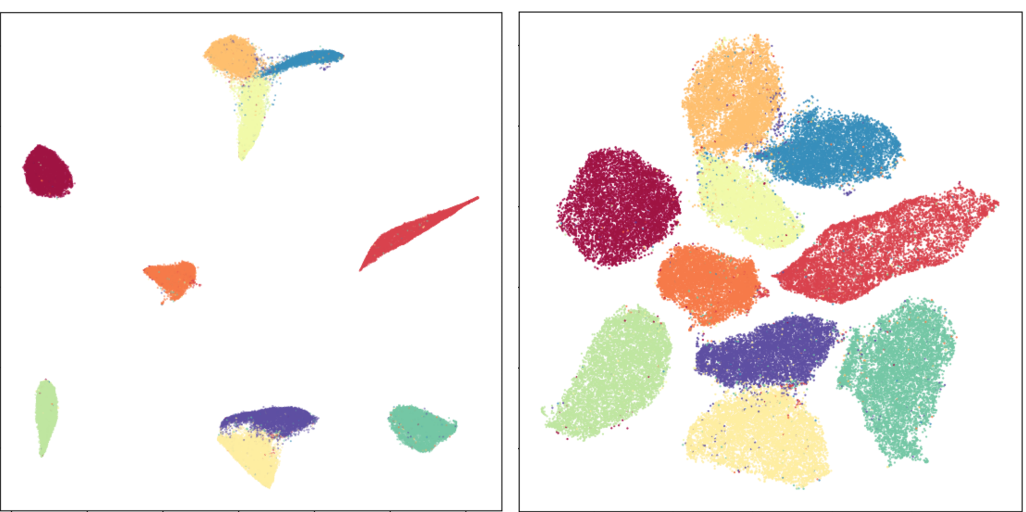}}
\caption{Ablation study of LocalMAP on MNIST dataset. \textbf{Left}: LocalMAP with adjusted NN edge weights but without local FP resampling. \textbf{Right}: LocalMAP with local FP resampling but without adjusted NN edge weights.}
\label{fig:ablation_MNIST}
\end{center}
\end{figure}

\section{Experiments}
\label{sec:experiment}

\subsection{Experimental Setup}

    \subsubsection{Datasets.} Inspired by other DR studies \citep{Tang16, UMAP, TriMAP, huang2022towards}, the following datasets are used to evaluate and compare DR methods: MNIST \citep{lecun2010mnist}, FMNIST \citep{fmnist}, USPS \citep{USPS}, COIL20 \citep{coil20}, 20NG \citep{20NG},  Seurat \citep{stuart2019comprehensive}, Kang \citep{kang2018multiplexed}, Human Cortex \citep{human_cortex_data} and CBMC \citep{neurips2021_data}. \subsubsection{Algorithms.} We evaluate LocalMAP in comparison with several other DR techniques: t-SNE \citep{van_der_Maaten08}, UMAP \citep{UMAP}, PaCMAP \citep{pacmap}, TriMap \citep{TriMAP}, PHATE \citep{phate}, NCVis \cite{artemenkov2020ncvis}, h-NNE \cite{sarfraz2022hnne}, Neg-t-SNE \cite{damrich2023from}, and InfoNC-t-SNE \cite{damrich2023from}. For the t-SNE algorithm, we utilize the openTSNE implementation \cite{openTSNE}. \subsubsection{Computational Environment.} All experiments were run on a 12 Core Intel(R) Xeon(R) CPU E5640 @ 2.67GHz with a GPU RTX2080Ti, with memory limit 32G. All experiments were run 10 times for error bar computation. \subsubsection{Biological Dataset Preprocessing.} For the single-cell datasets, all variables were normalized and log-transformed by the \texttt{SCANPY} package \citep{scanpy}. The top 1000 variance genes within each dataset were selected before applying LocalMAP. For the datasets that have different batches, the ComBat algorithm \citep{combact} was used to remove batch effects. The detailed dataset description are shown within Appendix D.

\subsection{Experimental Results: Silhouette Score}
\label{sec:sil_score}
Evaluation of DR approaches is challenging because DR methods are unsupervised. There are numerous evaluation methods that characterize different aspects of performance; an overview is given by \citet{huang2022towards}, though several DR papers propose their own evaluation metrics \citep[e.g.,][]{TriMAP}. In this work, we use the silhouette score \cite{rousseeuw1987silhouettes} as our evaluation metric because it captures how clearly the clusters are separated, which no other metric captures. The definition of the sihouette score is in Appendix E. 

All the methods are evaluated based on their default hyperparameters. The result in Table \ref{tab:silhouette_score} shows that LocalMAP separates clusters better than other approaches. This table simply quantifies the high quality clusters we see visually in the LocalMAP plots throughout this paper. We have also shown the performance of UMAP, LargeVis, t-SNE, and PHATE with tuned hyperparameters, and additional results are in Table 6 in Appendix J. \textit{LocalMAP shows a better separation among clusters.}

\begin{table*}[ht]
\centering
\caption{Silhouette scores for different Algorithms. \textbf{Bold} is best, \underline{underline} is second best or statistically insignificant from the best. Each row is an algorithm, each column is a dataset.}
\label{tab:silhouette_score}
\renewcommand\arraystretch{1.2}
\scalebox{0.87}{
\begin{tabular}{cccccccccc}
\toprule
 & \textbf{MNIST} & \textbf{FMNIST} & \textbf{USPS} & \textbf{COIL20} & \textbf{20NG} & \textbf{Kang} & \textbf{Seurat} & \begin{tabular}[c]{@{}l@{}}\textbf{Human}\\ \textbf{Cortex} \end{tabular} & \begin{tabular}[c]{@{}l@{}}\textbf{CBMC} \end{tabular} \\
\midrule
\textbf{PCA} & 0.02$\pm$0.00 & -0.03$\pm$0.00 & 0.10$\pm$0.00 & 0.01$\pm$0.00 & -0.19$\pm$0.00 & 0.12$\pm$0.00 & -0.06$\pm$0.00 & -0.08$\pm$0.00 & -0.11$\pm$0.00 \\
\textbf{t-SNE} & 0.35$\pm$0.00 & 0.12$\pm$0.00 & 0.42$\pm$0.00 & 0.41$\pm$0.00 & \underline{-0.11$\pm$0.00} & 0.40$\pm$0.01 & 0.22$\pm$0.01 & 0.11$\pm$0.01 & 0.15$\pm$0.01 \\
\textbf{UMAP} & 0.52$\pm$0.01 & \textbf{0.19$\pm$0.00} & 0.53$\pm$0.00 & \textbf{0.58$\pm$0.01} & -0.15$\pm$0.01 & 0.51$\pm$0.00 & 0.30$\pm$0.00 & 0.12$\pm$0.02 & \underline{0.22$\pm$0.00} \\
\textbf{PaCMAP} & \underline{0.54$\pm$0.01} & \textbf{0.19$\pm$0.00} & \underline{0.56$\pm$0.00} & 0.51$\pm$0.02 & \underline{-0.11$\pm$0.01} & 0.53$\pm$0.00 & \underline{0.31$\pm$0.00} & \underline{0.13$\pm$0.01} & \underline{0.22$\pm$0.00} \\
\textbf{LargeVis} & 0.49$\pm$0.05 & 0.11$\pm$0.03 & 0.41$\pm$0.12 & 0.38$\pm$0.01 & -0.13$\pm$0.01 & 0.44$\pm$0.01 & 0.25$\pm$0.01 & 0.10$\pm$0.02 & 0.17$\pm$0.00 \\
\textbf{TriMAP} & 0.41$\pm$0.00 & \underline{0.17$\pm$0.00} & 0.48$\pm$0.00 & 0.47$\pm$0.00 & -0.13$\pm$0.00 & \underline{0.55$\pm$0.00} & \textbf{0.32$\pm$0.00} & 0.07$\pm$0.00 & 0.21$\pm$0.00 \\
\textbf{PHATE} & 0.26$\pm$0.02 & 0.11$\pm$0.01 & 0.27$\pm$0.01 & 0.33$\pm$0.00 & -0.21$\pm$0.01 & 0.48$\pm$0.02 & 0.27$\pm$0.01 & -0.09$\pm$0.01 & 0.06$\pm$0.01 \\
\textbf{HNNE} & 0.21$\pm$0.03 & 0.06$\pm$0.04 & 0.23$\pm$0.00 & 0.03$\pm$0.00 & -0.34$\pm$0.03 & 0.39$\pm$0.06 & -0.00$\pm$0.03 & -0.09$\pm$0.06 & 0.12$\pm$0.05 \\
\textbf{Neg-t-SNE} & 0.48$\pm$0.00 & \textbf{0.19$\pm$0.00} & 0.48$\pm$0.00 & 0.44$\pm$0.01 & \underline{-0.11$\pm$0.00} & 0.53$\pm$0.00 & \textbf{0.32$\pm$0.00} & 0.12$\pm$0.00 & \textbf{0.24$\pm$0.00} \\
\textbf{NCVis} & 0.38$\pm$0.02 & \textbf{0.19$\pm$0.00} & 0.44$\pm$0.00 & 0.53$\pm$0.00 & -0.15$\pm$0.00 & 0.51$\pm$0.00 & 0.27$\pm$0.00 & 0.10$\pm$0.00 & 0.20$\pm$0.00 \\
\textbf{InfoNC-t-SNE} & 0.33$\pm$0.00 & 0.13$\pm$0.00 & 0.37$\pm$0.00 & 0.43$\pm$0.01 & \underline{-0.11$\pm$0.00} & 0.46$\pm$0.00 & 0.26$\pm$0.00 & 0.10$\pm$0.00 & 0.21$\pm$0.00 \\
\textbf{LocalMAP} & \textbf{0.58$\pm$0.00} & \textbf{0.19$\pm$0.00} & \textbf{0.60$\pm$0.00} & \underline{0.56$\pm$0.01} & \textbf{-0.10$\pm$0.00} & \textbf{0.60$\pm$0.00} & \textbf{0.32$\pm$0.00} & \textbf{0.14$\pm$0.00} & \underline{0.22$\pm$0.00} \\
\bottomrule
\end{tabular}}
\end{table*}

\begin{table*}[ht]
\caption{Running Time for Different Algorithms. Each row is an algorithm, each column is a dataset.}
\label{tab:running_time}
\centering
\renewcommand\arraystretch{1.2}
\scalebox{0.97}{
\begin{tabular}{cccccccccc}

\toprule
 & \textbf{MNIST} & \textbf{FMNIST} & \textbf{USPS} & \textbf{COIL20} & \textbf{20NG} & \textbf{Kang} & \textbf{Seurat} & \begin{tabular}[c]{@{}l@{}}\textbf{Human}\\ \textbf{Cortex} \end{tabular} & \begin{tabular}[c]{@{}l@{}}\textbf{CBMC} \end{tabular} \\
\midrule
\textbf{PCA} & 00:01.77 & 00:01.64 & 00:00.13 & 00:02.77 & 00:00.11 & 00:00.07 & 00:00.07 & 00:00.22 & 00:00.20 \\
\textbf{t-SNE} & 03:02.38 & 02:36.97 & 00:35.92 & 00:10.36 & 01:10.69 & 00:40.33 & 01:05.32 & 01:05.68 & 01:49.60 \\
\textbf{UMAP} & 00:28.05 & 00:33.94 & 00:08.80 & 00:08.76 & 00:08.87 & 00:08.38 & 00:13.51 & 00:27.18 & 00:29.33 \\
\textbf{PaCMAP} & 00:52.39 & 00:34.54 & 00:04.26 & 00:03.28 & 00:08.29 & 00:05.59 & 00:13.16 & 00:18.72 & 00:39.59 \\
\textbf{LargeVis} & 14:51.71 & 14:02.86 & 06:45.63 & 07:09.97 & 06:44.39 & 07:12.39 & 08:15.04 & 08:31.17 & 11:18.19 \\
\textbf{TriMAP} & 00:53.64 & 00:48.16 & 00:06.59 & 00:01.29 & 00:13.05 & 00:13.26 & 00:19.31 & 00:26.75 & 01:03.43 \\
\textbf{PHATE} & 04:15.00 & 01:45.05 & 00:11.40 & 00:05.15 & 00:19.44 & 00:20.12 & 00:42.26 & 01:25.71 & 06:26.80 \\
\textbf{HNNE} & 00:10.05 & 00:06.41 & 00:00.60 & 00:02.35 & 00:02.19 & 00:01.33 & 00:04.21 & 00:03.02 & 00:04.27 \\
\textbf{Neg-t-SNE} & 01:02.65 & 01:10.92 & 00:12.62 & 00:03.86 & 00:23.06 & 00:18.32 & 00:28.25 & 00:42.77 & 00:56.06 \\
\textbf{NCVis} & 02:36.92 & 01:39.70 & 00:12.92 & 00:14.16 & 00:26.69 & 00:31.72 & 00:42.20 & 01:03.09 & 02:16.05 \\
\textbf{InfoNC-t-SNE} & 01:06.19 & 01:01.58 & 00:14.30 & 00:05.63 & 00:25.31 & 00:18.85 & 00:36.02 & 00:46.74 & 01:04.94 \\
\textbf{LocalMAP} & 01:47.35 & 00:47.51 & 00:06.23 & 00:06.77 & 00:13.28 & 00:07.63 & 00:20.17 & 00:29.02 & 01:12.35 \\
\bottomrule
\end{tabular}}
\end{table*}

\subsection{Experimental Results: Posthoc Classification}
DR methods are unsupervised. Here, we consider an evaluation of whether class information, which is not presented to the DR algorithm, is preserved during the process of DR. This is a type of local structure evaluation, but unlike the silhouette score, it does not consider the margins between classes and has several other problems as an evaluation metric, discussed in Appendix F. Essentially, clusters that visually appear merged or are broken up into subclusters (i.e., poor DR plots) can still yield high posthoc classification scores. Table 3 and 7 in Appendix J show the results, which is that LocalMAP achieves similar posthoc classification performance to 11 state-of-the-art DR methods. (But, as discussed, this evaulation measure is problematic.)

\subsection{Experimental Results: Runtime}\label{sec:runtime}
Table \ref{tab:running_time} shows runtimes for several algorithms. LocalMAP is slightly more computationally intensive than other methods because it resamples the graph dynamically, but it is still efficient.
Efficiency is typically not as important as DR quality, as evaluated in Section \ref{sec:sil_score}.

\subsection{Experimental Results: Robustness and Sensitivity}
\label{subsec:sensitivity}

DR results cannot be trusted if they change under random initialization. 
Hence, an important characteristic of DR algorithms is that they produce consistent results with different initial conditions. Figure 17 in Appendix I shows the result of LocalMAP over several runs, showing that it is capable of consistently producing correct clustering results, whereas other methods do not. In fact, \textit{there are no runs of any other methods that distinctly separate the 10 clusters that LocalMAP finds every time.} 


\section{Discussion}
\label{sec:discussion}
While in the past, DR methods aimed to maintain the local and global structure inherent in high-dimensional data, these methods trusted its underlying graph structure, usually derived from an untrustworthy distance metric. LocalMAP does not do this, instead it dynamically (during run time) discovers what parts of the graph are untrustworthy, and what parts of the graph are not sampled well enough to be able to tell clusters apart. This is why its results are visually and quantitatively better than other DR methods -- it essentially cleans up the data while it is running.

One direction for future work would be to combine the insights of LocalMAP with new parametric approaches to DR that have just begun to yield successful results \cite{parampacmap}.


Our work could have substantial societal impact, for instance, if it is able to find clusters of patients that have different immune system properties \cite{SemenovaHIV2024,Falcinelli2023}. Our experiments indicate that LocalMAP has a higher chance of accomplishing this than past DR approaches.

\section*{Acknowledgments}
We acknowledge funding from the National Science Foundation under grants DGE-2022040, CMMI-2323978, and IIS-2130250 and the National Institutes of Health (NIH) under grant R01-DA054994 and other transactions award 1OT2OD032701-01.

\bibliography{aaai25}

\appendix
\clearpage
\onecolumn

\section{Detailed LocalMAP Algorithm}
\label{app:algo_detail}

\begin{algorithm*}
    \caption*{Algorithm 1: Implementation of LocalMAP}
    \begin{algorithmic}

    \REQUIRE  $\mathbf{X}$ - data matrix, $\bar d_{\text{adj}}$ from Section \ref{sec:LocalMAP}, parameter $\CNN$$\sim$10. 
    \ENSURE\quad
        
        $\bullet$ $\mathbf{Y}$ - low-dimensional data matrix. Initialize $\mathbf{Y}$ with PCA or random initialization.
        
    \STATE $\bullet$ Construct neighbor pairs, mid-near pairs and further pairs according to high-dimensional distance.

    \STATE $\bullet$ Optimize the low-dimensional embedding $\mathbf{Y}$ using the first two training phases of PaCMAP.

    \STATE $\bullet$ For the last training phase, optimize the low-dimensional embedding $\mathbf{Y}$ using the loss:
     $$ Loss (\mathbf{Y}) := \sum_{(i,j) \text{: NN}}\frac{\bar d_{\text{adj}} \cdot \sqrt{\tilde{d}_{ij}}}{2(\CNN + \tilde{d}_{ij})} + \sum_{(i,l) \text{: FP}}\frac{1}{\CFP + \tilde{d}_{il}},
    $$
    where  $\tilde{d}_{ij}=\|\by_{i} - \by_{j}\|^2 + 1$.
        We use the Adam optimizer, and the FP pairs are resampled every $10$ iterations to stay \textit{local} throughout the computation, so that all FPs have low-dimensional distance within $\bar d_{\text{adj}}$ if possible, i.e., $\|\by_i-\by_l\|\leq \bar d_{\text{adj}}$ for all $(i,l)$ FP pairs. (We impose a limit of 20 maximum sampling attempts for the FPs due to computational considerations.) 
    
    \STATE \textbf{return} $\mathbf{Y}$
    \end{algorithmic}
\end{algorithm*}

\clearpage
\section{Detailed proof of Theorem \ref{thm:thm1}}
\begin{proof}
Considering a dataset with $n$ data points distributed across $m$ clusters $C_1,C_2,...,C_m$, where each cluster $C_i$ contains $n_i$ data points $(n_1+n_2+...+n_m=n)$. For any two clusters $C_i$ and $C_j$, by assumption, we have:
$$
\forall x_i \in C_i, x_j\in C_j,\quad P(x_i,x_j \text{ are NNs})=p_{ij} 
$$
where $p_{ij}$ is constant. 

FP edges for a given point are sampled randomly from all non-NN points. For each point, $n_{FP}$ FP points are randomly selected, where $n_{FP}$ is a constant defaulting to $20$ for PaCMAP. Thus, the expected number of NNs and FPs between $C_i$ and $C_j$ are
\begin{equation*}
\begin{aligned}
&\mathbb{E}(\# \text{ of NNs between $C_i$, $C_j$}) = \sum_{x_i \in C_i, x_j\in C_j} P(x_i,x_j \text{ are NNs}) = \sum_{x_i \in C_i, x_j\in C_j} p_{ij} = n_i n_j p_{ij}\\
& \mathbb{E}(\# \text{ of FPs between $C_i$, $C_j$}) = \sum_{x_i \in C_i} \frac{n_j}{n} \cdot n_{FP} + \sum_{x_j \in C_j} \frac{n_i}{n} \cdot n_{FP} = \frac{2 n_i n_j n_{\text{FP}}}{n},
\end{aligned}
\end{equation*}
because each point $i\in C_i$ selects $\frac{n_j}{n}\cdot n_{\text{FP}}$ FP pairs, the total number of points in $C_i$ sampled from $C_i$ to $C_j$ is $\frac{n_i \cdot n_j \cdot n_{\text{FP}}}{n}$ FP pairs. Similarly, $\frac{n_i \cdot n_j \cdot n_{\text{FP}}}{n}$ FP pairs are sampled from all points in $C_j$ between $C_i$ and $C_j$. Therefore, $\frac{2n_i n_j \cdot n_{\text{FP}}}{n}$ total FP pairs are sampled between these two clusters.
Therefore, the ratio between the number of NN edges and the number of FP edges is
$$
\frac{\mathbb{E}(\# \text{ of NNs between $C_i$, $C_j$})}{\mathbb{E}(\# \text{ of FPs between $C_i$, $C_j$})} = \frac{n_i n_j p_{ij}}{2 n_i n_j n_{FP}/n} = \frac{n \cdot p_{ij}}{2 n_{\text{FP}}}.
$$
Considering that $p_{ij}$ is unaffected by $n$ and $n_{\text{FP}}$, $n_i$ and $n_j$ are constants, the ratio increases with $n$. This result is true for all clusters $C_i$ and $C_j$. Thus, for each pair of clusters, the ratio of NN edges to FP edges grows linearly in $n$. This completes the proof.
\end{proof}

\clearpage
\section{Proof of Theorem \ref{thm:principles}: six principles for DR loss functions}
\label{app:six_principles}
Here we check the conditions identified by \cite{pacmap} for high-quality DR loss functions. Consider a triplet $(i,j,k)$ where $i$ and $j$ are high-dimensional neighbors that should be attracted to each other, and $i$ and $k$ are further points in the high-dimension that should be repulsed from each other. How forces along pairs $(i,j)$ and $(i,k)$ should be affected by the distance between these pairs of points $d_{ij}$ and $d_{ik}$ are defined by the six principles.

LocalMAP's loss in the early stages follows these principles automatically. For its last stage, the loss function follows the principles for NNs that are close in low-dimensional space (which indicates they are true positive pairs) with distance smaller than $\sqrt{\CNN-1} = 3$. For NNs beyond that, we do not want them to obey the principles because they are probably false positives.

According to Proposition $1$ of \cite{pacmap}, for loss functions of the form:
\[
Loss = 
\sum_{ij} Loss_{\textrm{attractive}}(d_{ij}) + 
\sum_{ik} Loss_{\textrm{repulsive}}(d_{ik}),
\]
where its derivatives are:
\[
f(d_{ij}):=\frac{\partial Loss_{\textrm{attractive}}(d_{ij})}{\partial d_{ij}}, \quad
g(d_{ik}):=-\frac{\partial Loss_{\textrm{repulsive}}(d_{ik})}{\partial d_{ik}},
\]
each of the six principles is obeyed under the following conditions of the loss function's derivatives:

\begin{enumerate}
    \item The functions $f(d_{ij})$ and $g(d_{ik})$ are non-negative and unimodal.
    \item $\lim_{d_{ij}\rightarrow 0}f(d_{ij})=\lim_{d_{ij}\rightarrow \infty}f(d_{ij})=0$, 
    $\lim_{d_{ik}\rightarrow 0}g(d_{ik})=\lim_{d_{ik}\rightarrow \infty}g(d_{ik})=0$.  
\end{enumerate}

For LocalMAP, the loss function for earlier stages is adopted from PaCMAP and thus obeys the principles. For LocalMAP's last phase, the loss function is:
\[
Loss_{\textrm{attractive}}(d_{ij})=\frac{\bar d_{\text{adj}} \cdot \sqrt{\tilde{d}_{ij}}}{2(\CNN + \tilde{d}_{ij})}, \quad \tilde{d}_{ij} =d_{ij}^2 + 1 
\]
and thus
\[
f(d_{ij}) = \frac{\partial Loss_{\textrm{attractive}}(d_{ij})}{\partial d_{ij}} = \frac{\bar d_{\text{adj}} \cdot d_{ij}(\CNN - d_{ij}^2 - 1)}{2(d_{ij}^2 + 1)^{\frac{1}{2}}(\CNN + d_{ij}^2 + 1)} \ge 0 \quad\text{   when   } 0\le d_{ij} \le \sqrt{\CNN-1}. 
\]
To show $f(d_{ij})$ is unimodal:
$$
\frac{\partial f(d_{ij})}{\partial d_{ij}} = \frac{\bar d_{\text{adj}} (d_{ij}^2+1)^{-\frac{1}{2}} (\CNN + d_{ij}^2 + 1) (2 d_{ij}^6 + 3 d_{ij}^4 - 6 \CNN d_{ij}^4 - 6\CNN d_{ij}^2 + \CNN^2 - 1)}{2(d_{ij}^2 + 1)(\CNN + d_{ij}^2 + 1)^4}.
$$
Since $\bar d_{\text{adj}} (d_{ij}^2+1)^{-\frac{1}{2}} (\CNN + d_{ij}^2 + 1)$ and $(d_{ij}^2 + 1)(\CNN + d_{ij}^2 + 1)^4$ are always positive given $\bar d_{\text{adj}}$ is positive, the sign of $\frac{\partial f(d_{ij})}{\partial d_{ij}}$ depends on $\Delta = 2 d_{ij}^6 + 3 d_{ij}^4 - 6 \CNN d_{ij}^4 - 6\CNN d_{ij}^2 + \CNN^2 - 1$. 

Considering $\Delta$ is a six-degree polynomial of $d_{ij}$, $\Delta=0$ has $6$ roots in the complex field. Let $t=d_{ij}^2$, then 
$$\Delta = 2t^3 + 3t^2 - 6 \CNN t^2 - 6 \CNN t + \CNN^2 - 1=0.
$$ 
It is clear that
$\lim_{t\rightarrow -\infty} \Delta= -\infty, \lim_{t\rightarrow \infty} \Delta=\infty$. When $t=0$, $\Delta = \CNN-1 > 0$, when $t=\CNN - 1$, $\Delta < 0$. Then $t$ has a root in each of $(-\infty,0), (0,\CNN-1)$ and $(\CNN-1, \infty)$, where each root of $t$ corresponds to two roots of $d_{ij}$.

For the one negative root of $t$, it corresponds to two complex root of $d_{ij}$. For the two roots of $t$ in $(0, \CNN-1)$ and $(\CNN-1, \infty)$, each corresponds to one positive root of $d_{ij}$ and one negative root of $d_{ij}$. Therefore, $d_{ij}$ has only one root of $\Delta$ in $(0, \sqrt{\CNN-1})$, and $\frac{\partial f(d_{ij})}{\partial d_{ij}}$ is first positive and then negative in $(0, \sqrt{\CNN-1})$, which indicates that $d_{ij}$ first increases and then decreases in $(0, \sqrt{\CNN-1})$ and thus is unimodal in $(0, \sqrt{\CNN-1})$. 

LocalMAP's loss for FP pairs has an analogous proof, so we are done. \qed

\clearpage
\section{Data Description}\label{sec:descprition}

Detailed descriptions of the data set are shown as follows:

\begin{table}[ht]
\centering
\begin{tabular}{ccc}
\hline
\textbf{Dataset} & \textbf{\# of samples} & \textbf{\# of dimensions} \\ \hline
MNIST & 70,000 & 784 \\
FMNIST & 70,000 & 784 \\
USPS & 9,298 & 256 \\
COIL20 & 1,440 & 16384 \\
20NG & 18,846 & 100 \\
Kang & 13,999 & 1000 \\
Seurat & 30,672 & 1000 \\
Human Cortex & 43,349 & 1000 \\
CBMC & 67,686 & 1000 \\ \hline
\end{tabular}
\end{table}

\section{Silhouette Score}
\label{app:silhouette}

The Silhouette score was originally used to evaluate the quality of clusters in clustering analysis where the ground truth labels are not present.
In this paper, we use the Silhouette score to evaluate cluster quality from an unsupervised algorithm using ground truth labels from supervised data by substituting the clusters generated from clustering algorithms to ground truth class labels. In this case, it measures the embedding's within-class cohesion (measured by a point's average distance to other points in the same class) against between-class separation (measured by a point's minimum average distance to points of different classes). It is calculated as follows:

\begin{enumerate}
    \item For each data point $i$ in a dataset:
 Calculate average distance $a_i$ between \(i\) and all other data points within the same class $C_i$: $a_i = \sum_{j \in C_i, j \neq i} d(i, j) / (|C_i| - 1)$
      where \(d(i, j)\) is the distance between data points \(i\) and \(j\).

Then calculate the minimum average distance 
      $b_i$ of $i$ to all data points in a different class $C_j$:  $b_i = \min_{k \neq i} \sum_{k \in C_k} d(i, k)/|C_k|$.
    \item Calculate the silhouette score $S_i$ for data point $i$:
  $S_i = \frac{b_i - a_i}{\max(a_i, b_i)}$.
    \item The overall silhouette score $S$ for the entire dataset is the average of the silhouette scores for all data points:
$S = \frac{\sum_{i} S_i}{N}$
  where $N$ is the number of data points in the dataset.
\end{enumerate}

\clearpage
\section{Posthoc Classification}\label{sec:posthoc}
Tables \ref{tab:svm_score} show posthoc classification results using SVM. Here, SVM is optimized globally. Posthoc classification aims to determine whether class labels are maintained during DR projection even though labels are not used during the process of DR. 
LocalMAP's performance is comparable with other local DR methods with respect to these scores. 

There are several problems with using posthoc classification as a performance metric. One problem is that its results are not consistent between classification methods, as we can see from these tables. It also does not handle global structure at all \citep{huang2022towards}, meaning that one true cluster could be broken into many subclusters by DR (which would be a poor DR result) and the posthoc classification score could be unchanged. Posthoc classification does not measure margins between clusters. Thus, two true clusters that appear stuck together visually (as a fault of DR) may receive the same posthoc classification score as if they were well separated.

\begin{table}[ht]
\caption{SVM Score for Different Algorithms, \textbf{Bold} is best, \underline{underline} is not significantly different from best (with only 1\% difference). Each row is an algorithm, each column is a dataset.}
\label{tab:svm_score}
\centering
\scalebox{0.9}{
\begin{tabular}{cccccccccc}
\toprule
 & \textbf{MNIST} & \textbf{FMNIST} & \textbf{USPS} & \textbf{COIL20} & \textbf{20NG} & \textbf{Kang} & \textbf{Seurat} & \begin{tabular}[c]{@{}l@{}}\textbf{Human}\\ \textbf{Cortex} \end{tabular} & \begin{tabular}[c]{@{}l@{}}\textbf{CBMC} \end{tabular} \\
\midrule
\textbf{PCA} & 0.47$\pm$0.00 & 0.55$\pm$0.00 & 0.56$\pm$0.00 & 0.66$\pm$0.00 & 0.15$\pm$0.00 & 0.73$\pm$0.00 & 0.46$\pm$0.00 & 0.57$\pm$0.00 & 0.44$\pm$0.00 \\
\textbf{t-SNE} & \textbf{0.97$\pm$0.00} & \underline{0.74$\pm$0.00} & \textbf{0.96$\pm$0.00} & \textbf{0.85$\pm$0.01} & 0.45$\pm$0.01 & \underline{0.95$\pm$0.00} & \underline{0.84$\pm$0.00} & \textbf{0.82$\pm$0.00} & 0.82$\pm$0.00 \\
\textbf{UMAP} & \textbf{0.97$\pm$0.00} & \underline{0.74$\pm$0.01} & \underline{0.95$\pm$0.00} & 0.82$\pm$0.01 & 0.44$\pm$0.01 & \underline{0.95$\pm$0.00} & 0.83$\pm$0.00 & \underline{0.81$\pm$0.00} & \underline{0.82$\pm$0.00} \\
\textbf{PaCMAP} & \textbf{0.97$\pm$0.00} & \underline{0.74$\pm$0.00} & \underline{0.95$\pm$0.00} & 0.83$\pm$0.01 & \underline{0.46$\pm$0.01} & \underline{0.95$\pm$0.00} & \textbf{0.85$\pm$0.00} & \underline{0.81$\pm$0.00} & \textbf{0.83$\pm$0.00} \\
\textbf{LargeVis} & \underline{0.96$\pm$0.00} & \underline{0.74$\pm$0.01} & 0.92$\pm$0.06 & 0.80$\pm$0.02 & \textbf{0.47$\pm$0.00} & \underline{0.95$\pm$0.00} & \underline{0.84$\pm$0.00} & \textbf{0.82$\pm$0.00} &\underline{0.82$\pm$0.00} \\
\textbf{TriMAP} & \underline{0.96$\pm$0.00} & 0.73$\pm$0.00 & \underline{0.95$\pm$0.00} & 0.77$\pm$0.01 & 0.42$\pm$0.01 & \underline{0.95$\pm$0.00} & \underline{0.84$\pm$0.00} & 0.79$\pm$0.00 & \underline{0.82$\pm$0.00} \\
\textbf{PHATE} & 0.86$\pm$0.02 & 0.66$\pm$0.01 & 0.86$\pm$0.01 & \textbf{0.84$\pm$0.00} & 0.33$\pm$0.01 & 0.92$\pm$0.00 & 0.77$\pm$0.00 & 0.70$\pm$0.01 & 0.72$\pm$0.01 \\
\textbf{HNNE} & 0.84$\pm$0.03 & 0.68$\pm$0.01 & 0.82$\pm$0.00 & 0.63$\pm$0.00 & 0.24$\pm$0.05 & 0.90$\pm$0.01 & 0.74$\pm$0.01 & 0.68$\pm$0.03 & 0.73$\pm$0.04 \\
\textbf{Neg-t-SNE} & \underline{0.96$\pm$0.00} & \underline{0.74$\pm$0.00} & 0.93$\pm$0.00 & 0.81$\pm$0.01 & 0.43$\pm$0.01 & \underline{0.95$\pm$0.00} & \underline{0.84$\pm$0.00} & \underline{0.81$\pm$0.00} & \underline{0.82$\pm$0.00} \\
\textbf{NCVis} & 0.94$\pm$0.01 & 0.73$\pm$0.00 & 0.92$\pm$0.00 & 0.79$\pm$0.00 & 0.36$\pm$0.01 & 0.94$\pm$0.00 & 0.83$\pm$0.00 & \textbf{0.82$\pm$0.00} & \underline{0.82$\pm$0.00} \\
\textbf{InfoNC-t-SNE} & \underline{0.96$\pm$0.00} & \underline{0.74$\pm$0.00} & 0.93$\pm$0.00 & 0.82$\pm$0.01 & 0.42$\pm$0.00 & \underline{0.95$\pm$0.00} & \textbf{0.85$\pm$0.00} &\underline{0.81$\pm$0.00} & \textbf{0.83$\pm$0.00} \\
\textbf{LocalMAP} & \textbf{0.97$\pm$0.00} & \textbf{0.75$\pm$0.00} & \textbf{0.96$\pm$0.00} & \underline{0.83$\pm$0.01} & \underline{0.46$\pm$0.01} & \textbf{0.96$\pm$0.00} & \underline{0.84$\pm$0.00} & \underline{0.81$\pm$0.00} & \underline{0.82$\pm$0.00} \\
\bottomrule
\end{tabular}}
\end{table}

\clearpage
\section{The relationship between run time and the number of samples}

Figure \ref{fig:runtime_size} shows the relationship between the number of samples and the run time. 
\begin{figure}[ht]
    \centering
    \includegraphics[width=0.85\linewidth]{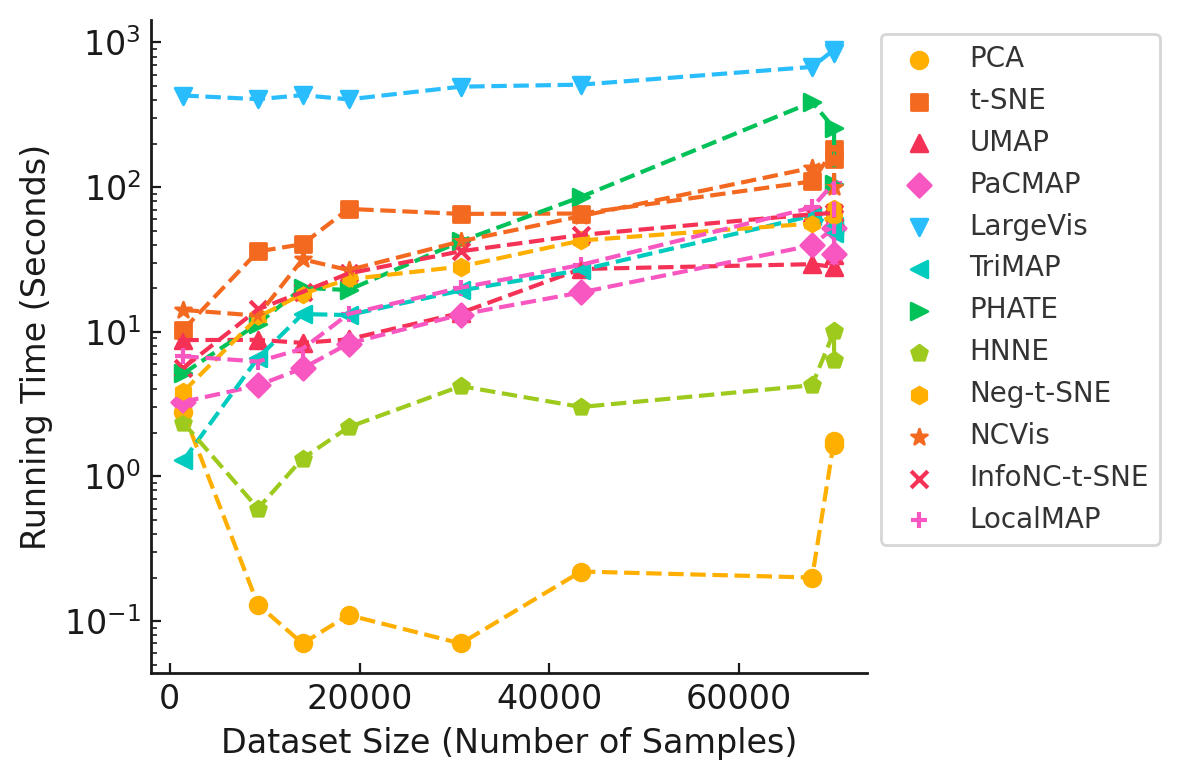}
    \caption{The relationship between the number of samples and the run time (log-scaled in seconds).}
    \label{fig:runtime_size}
\end{figure}

\clearpage
\section{Additional Visualization for Datasets}
\label{app:additional_datasets_viz}

We show how different DR methods perform on different datasets in Figures \ref{fig:mnist_embed} to \ref{fig:neurips_embed}.

\begin{figure}[ht]
    \centering
    \includegraphics[width=\textwidth]{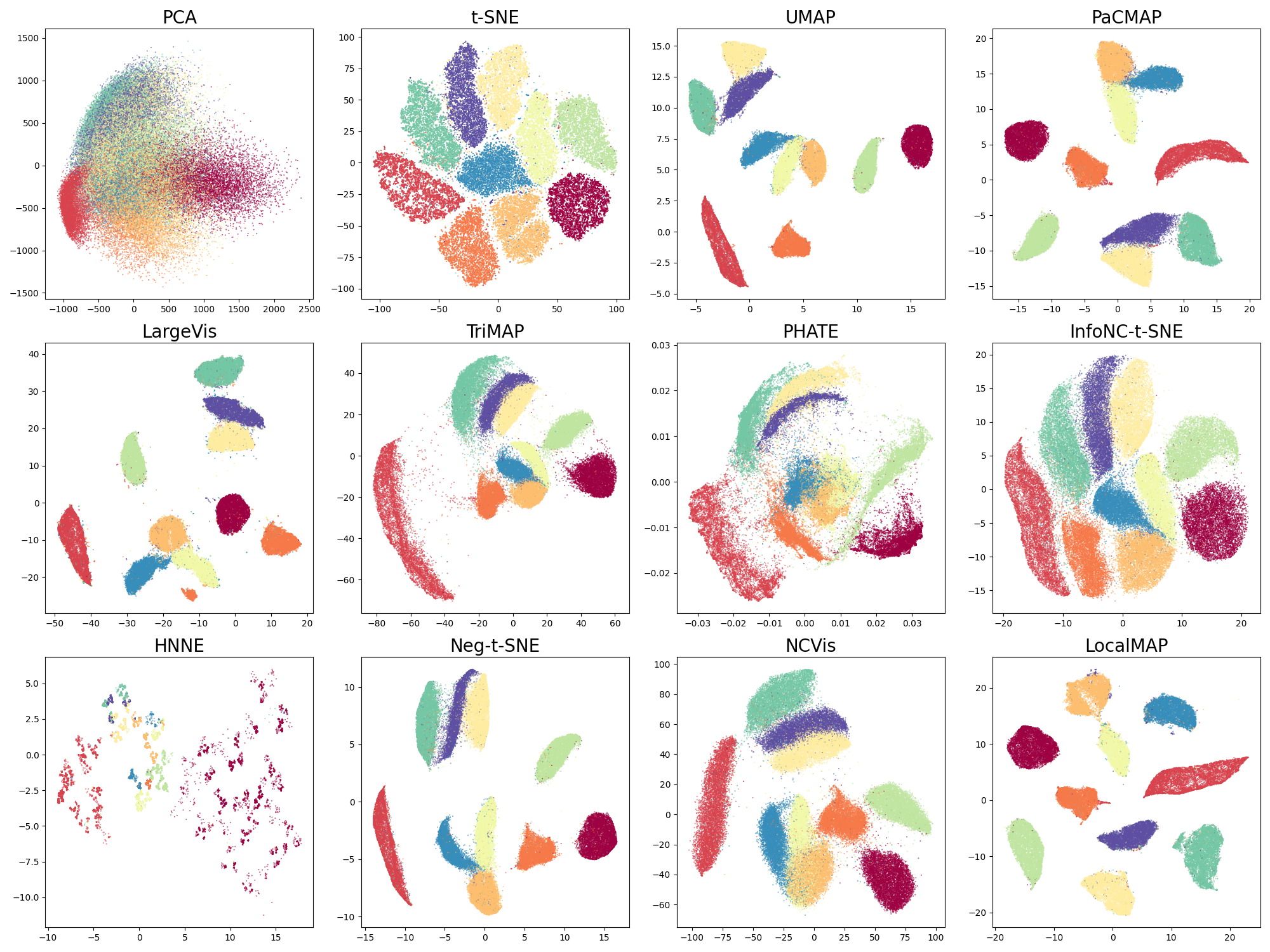}
    \caption{Different DR embeddings for MNIST \citep{lecun2010mnist}}.
    \label{fig:mnist_embed}
\end{figure}

\begin{figure*}[ht]
    \centering
    \includegraphics[width=\textwidth]{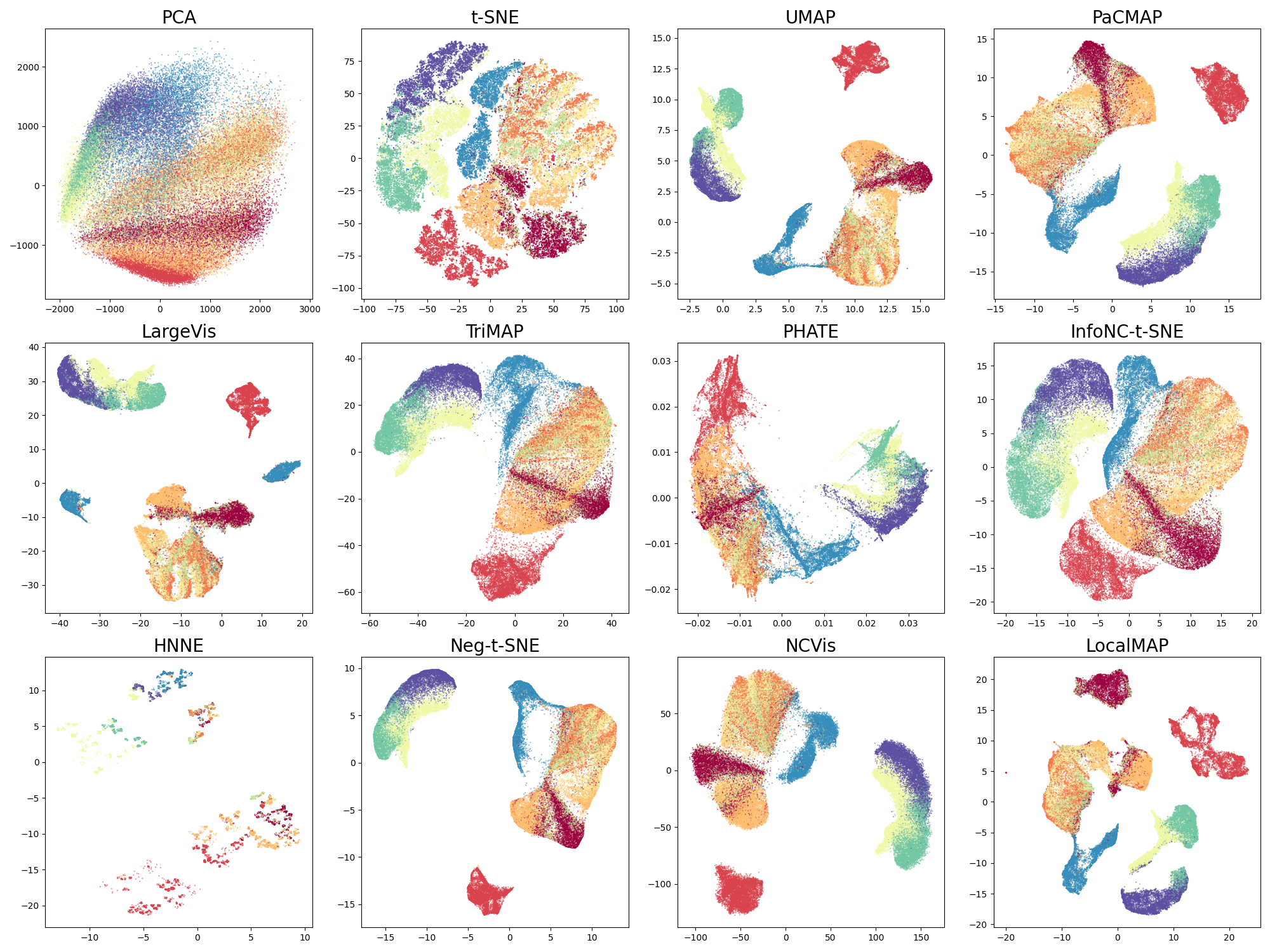}
    \caption{Different DR embeddings for FMNIST \citep{fmnist}}
    \label{fig:fmnist_embed}
\end{figure*}

\begin{figure*}[ht]
    \centering
    \includegraphics[width=\textwidth]{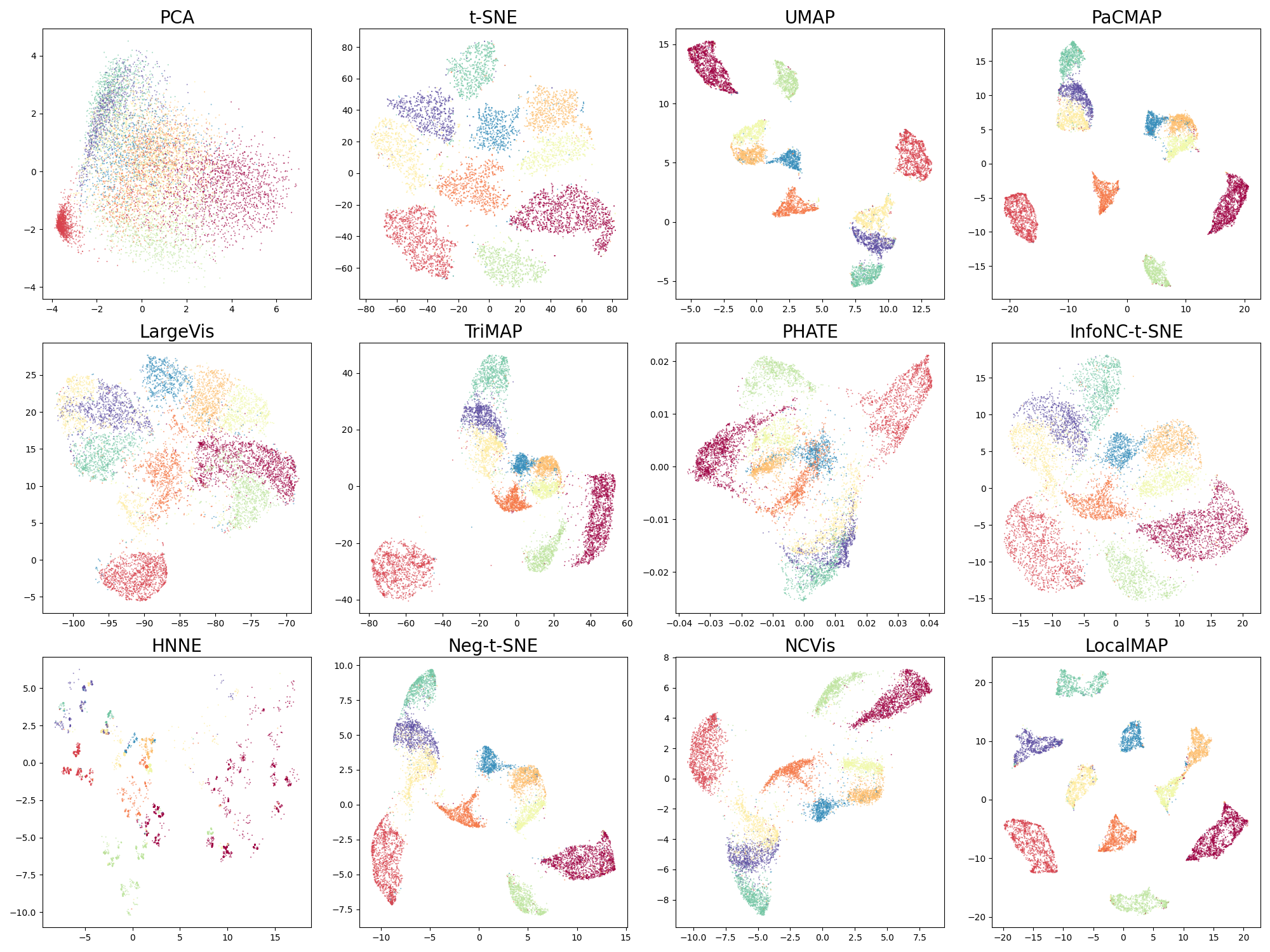}
    \caption{Different DR embeddings for USPS \citep{USPS}}
    \label{fig:usps_embed}
\end{figure*}

\begin{figure*}[ht]
    \centering
    \includegraphics[width=\textwidth]{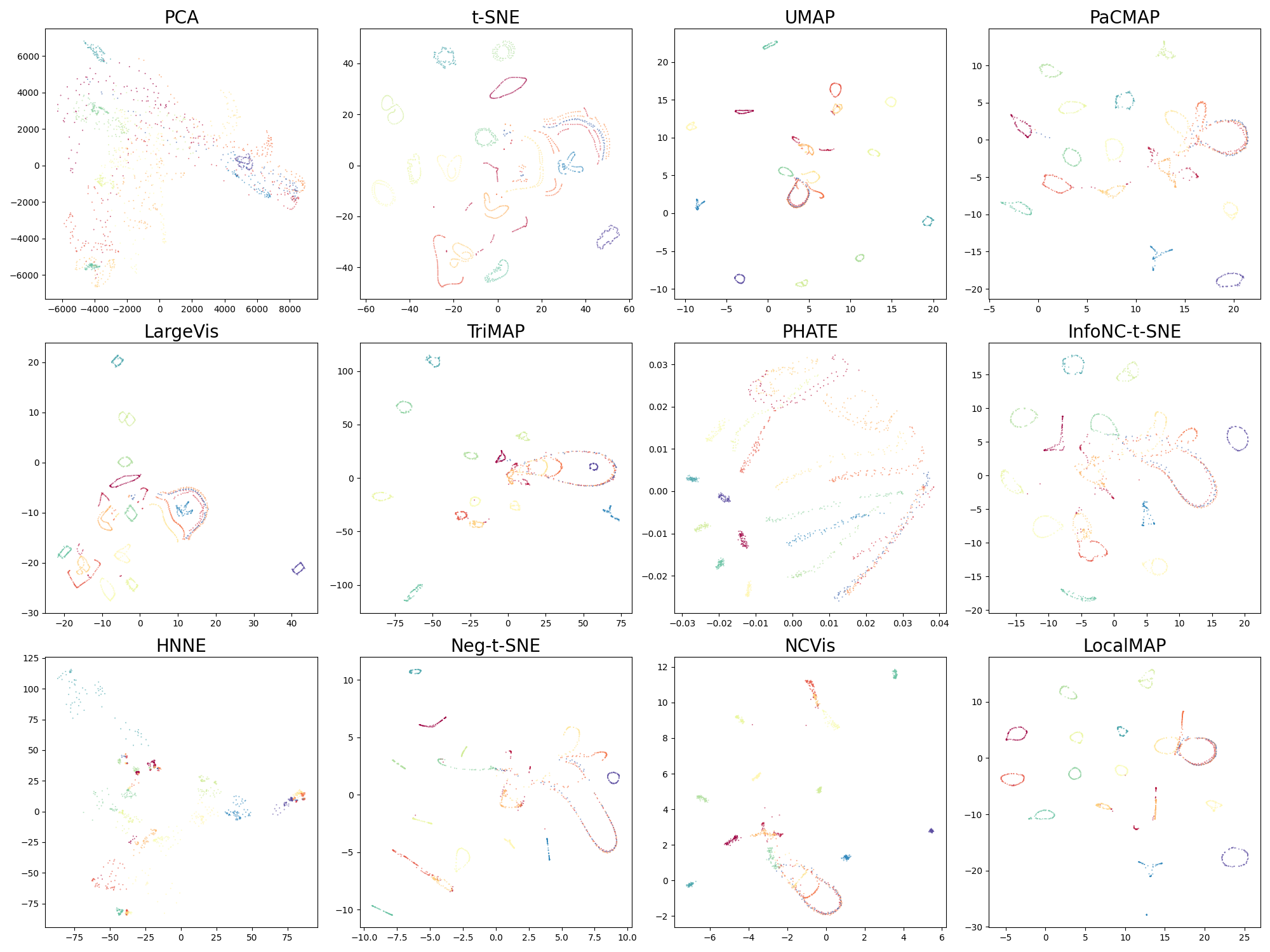}
    \caption{Different DR embeddings for COIL20 \citep{coil20}}
    \label{fig:coli20_embed}
\end{figure*}

\begin{figure*}[ht]
    \centering
    \includegraphics[width=\textwidth]{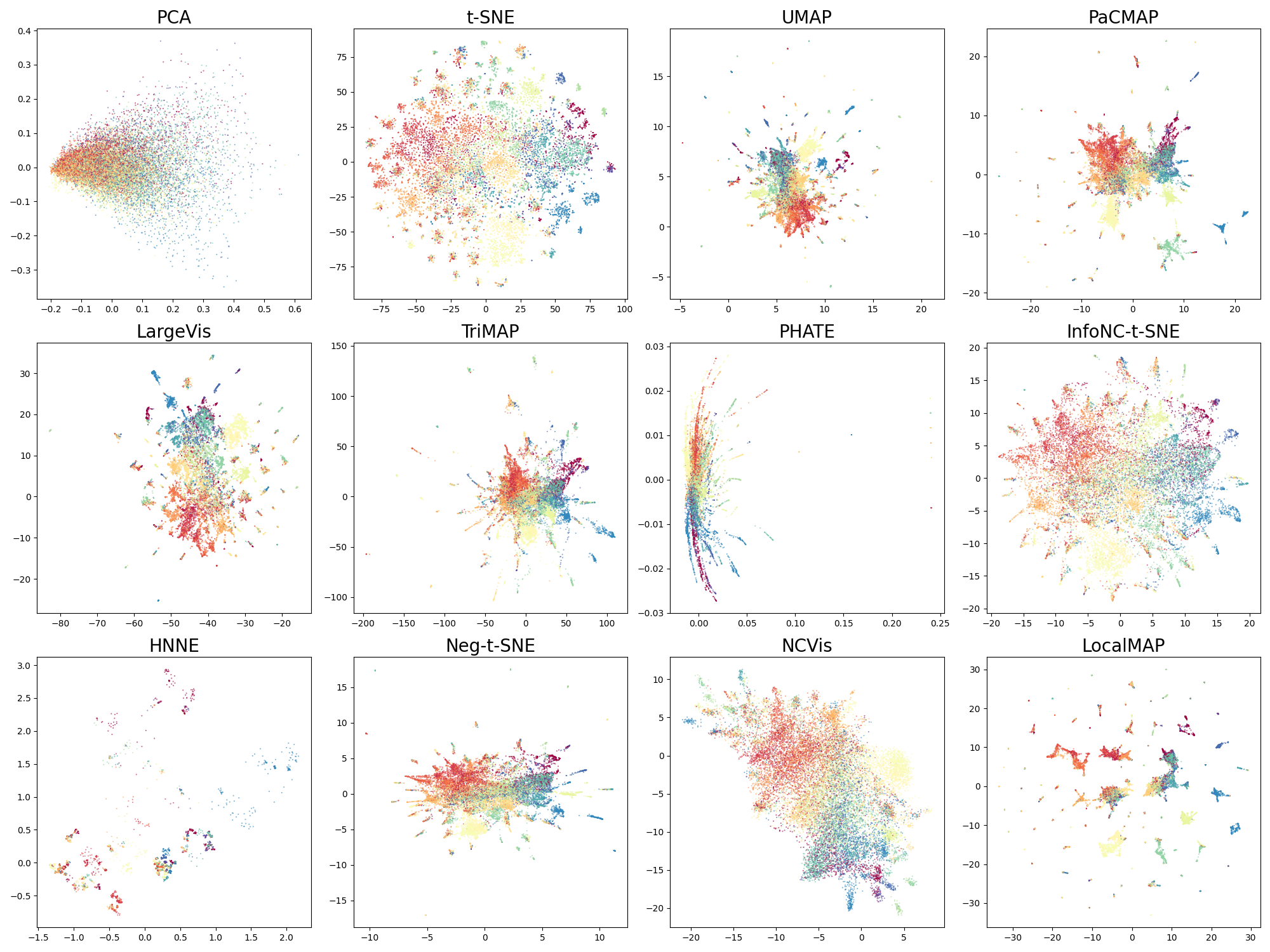}
    \caption{Different DR embeddings for 20NG \citep{20NG}. Newsgroups tend to be intertwined.}
    \label{fig:20ng_embed}
\end{figure*}

\begin{figure*}[ht]
    \centering
    \includegraphics[width=\textwidth]{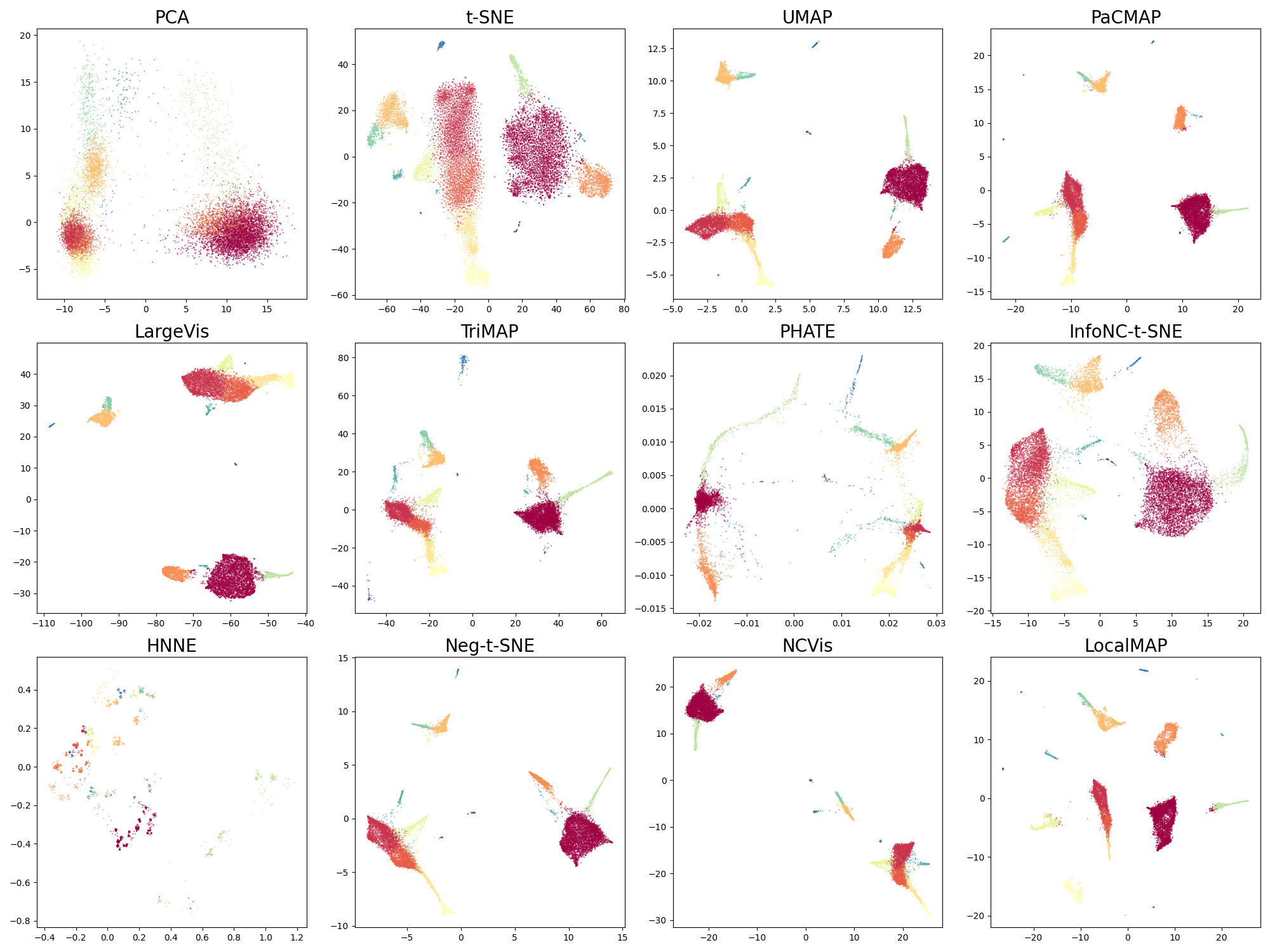}
    \caption{Different DR embeddings for Kang \citep{kang2018multiplexed}}
    \label{fig:kang_embed}
\end{figure*}

\begin{figure*}[ht]
    \centering
    \includegraphics[width=\textwidth]{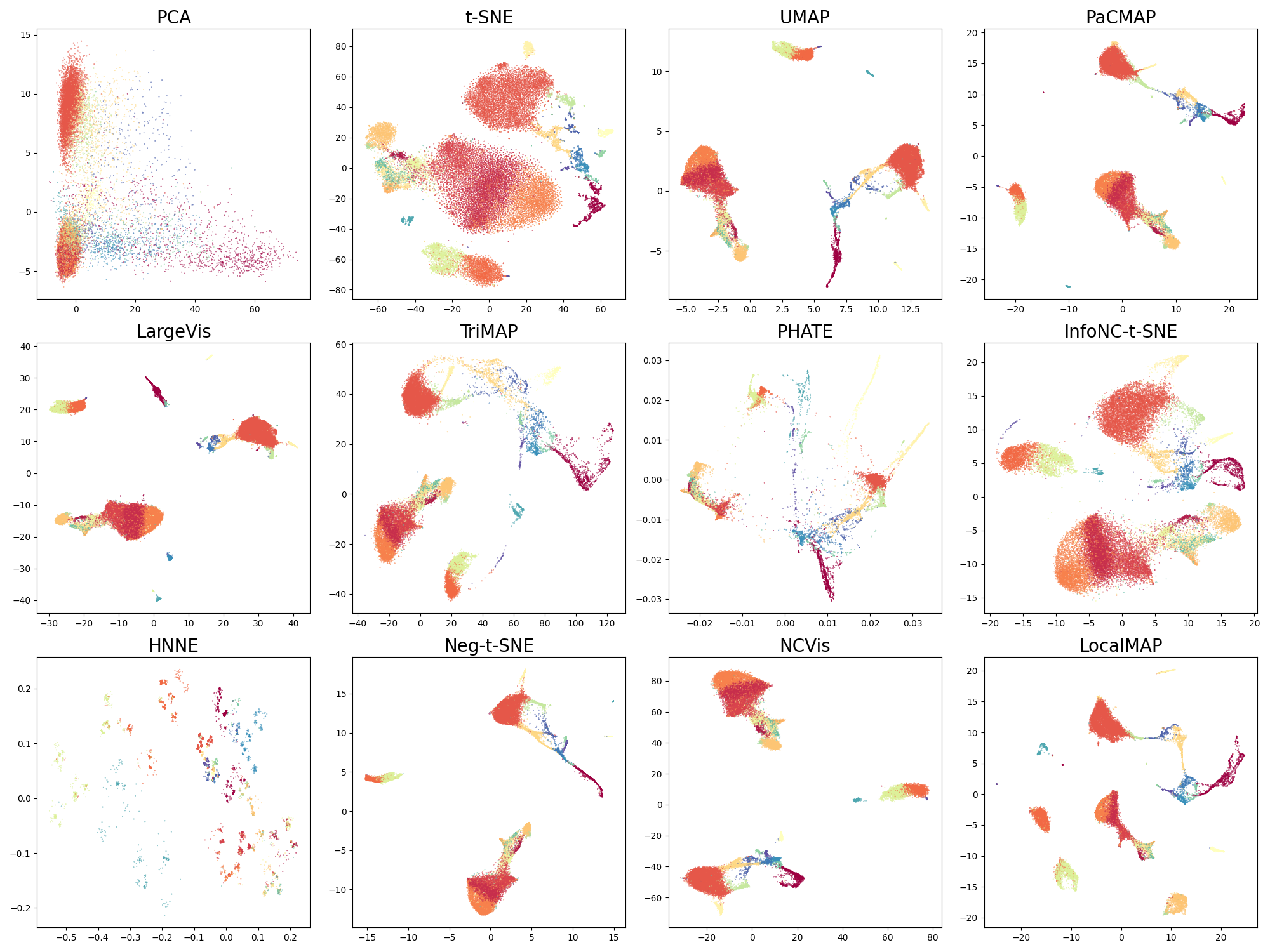}
    \caption{Different DR embeddings for Seurat \citep{stuart2019comprehensive}}
    \label{fig:seurat_embed}
\end{figure*}

\begin{figure*}[ht]
    \centering
    \includegraphics[width=\textwidth]{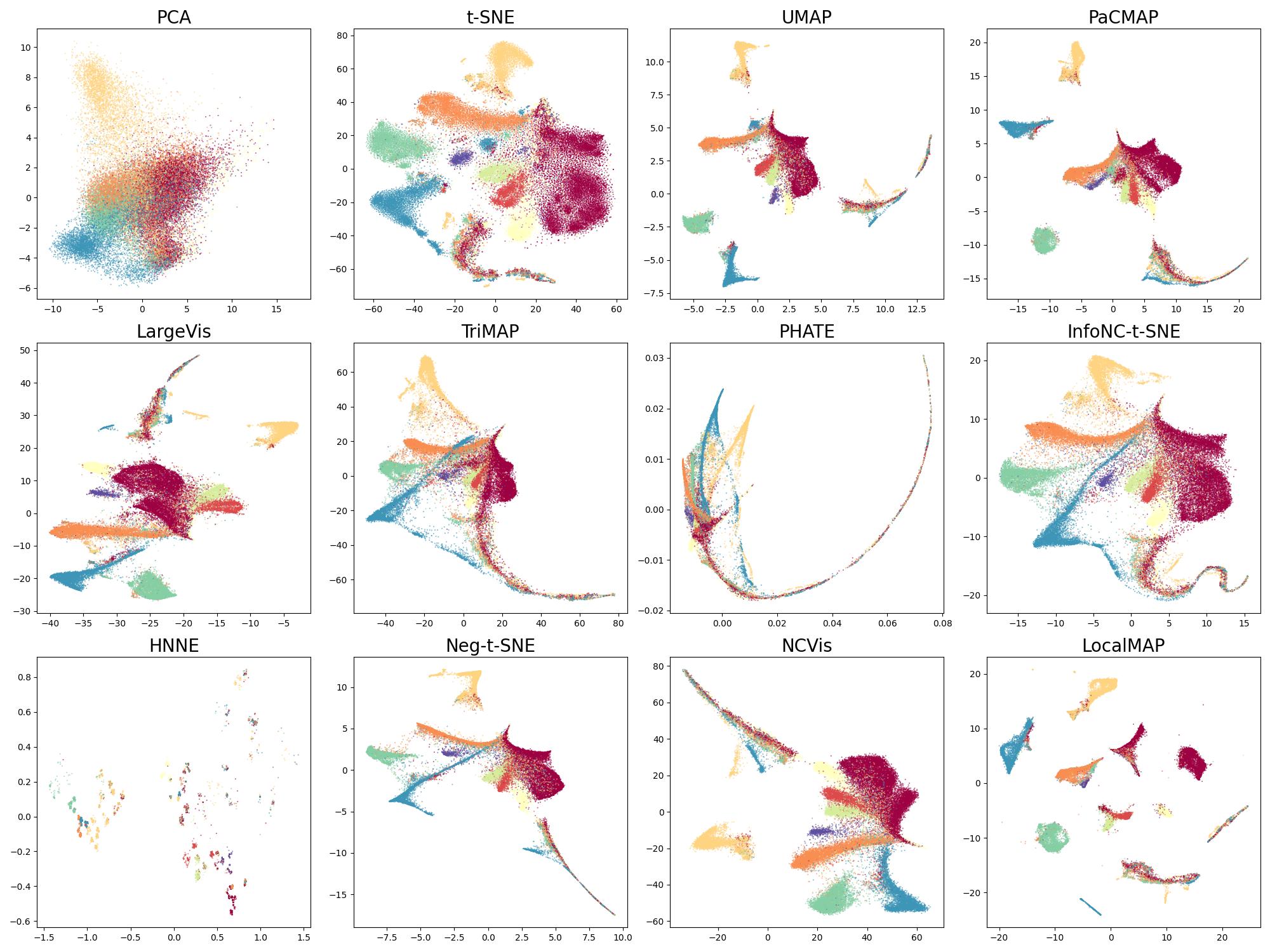}
    \caption{Different DR embeddings for Human Cortex \citep{human_cortex_data}}
    \label{fig:human_cortex_embed}
\end{figure*}

\begin{figure*}[ht]
    \centering
    \includegraphics[width=\textwidth]{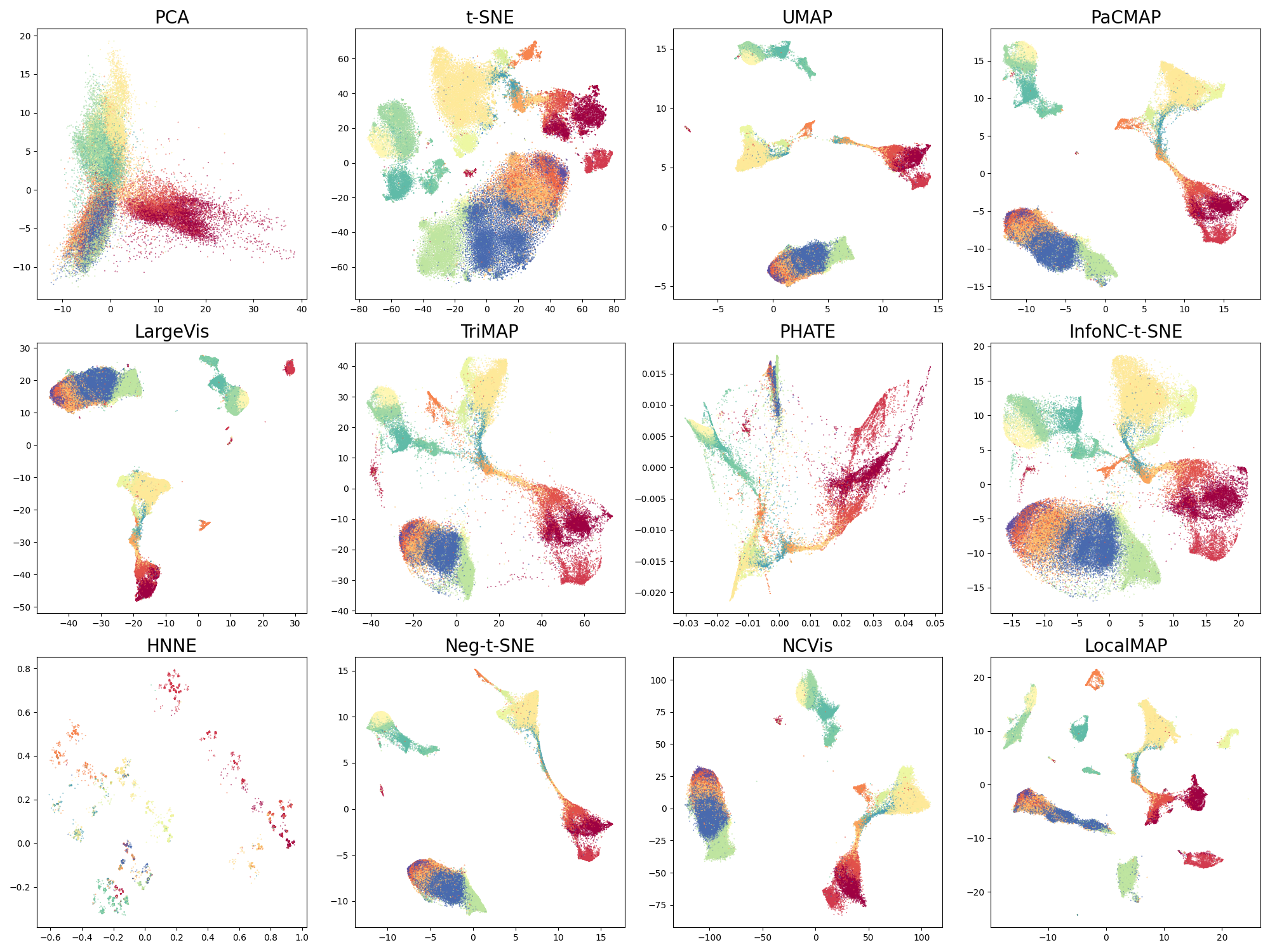}
    \caption{Different DR embeddings for CBMC \citep{neurips2021_data}}
    \label{fig:neurips_embed}
\end{figure*}

\clearpage
\section{Sensitivity Check for Initialization}
\label{app:sensitivity_init}

In this section, we will assess how stable different dimension reduction methods are when points are uniformly randomly initialized in the low-dimensional space. Figures \ref{fig:sensitivtiy_inti_mnist} show the results. LocalMAP is able to reliably separate the 10 clusters for every run. No other method is able to do this for any run -- each has multiple clusters combined that should be separated. t-SNE sometimes has severe flaws in its DR plot for MNIST in that the blue cluster is sometimes broken up; UMAP does this once for one of the red clusters. TriMAP has severe problems with local structure preservation. We have also marked those seriously problematic areas in Figure \ref{fig:sensitivtiy_inti_mnist} with red dashed boxes.

\begin{figure}[ht]
    \centering
    \includegraphics[width=\textwidth]{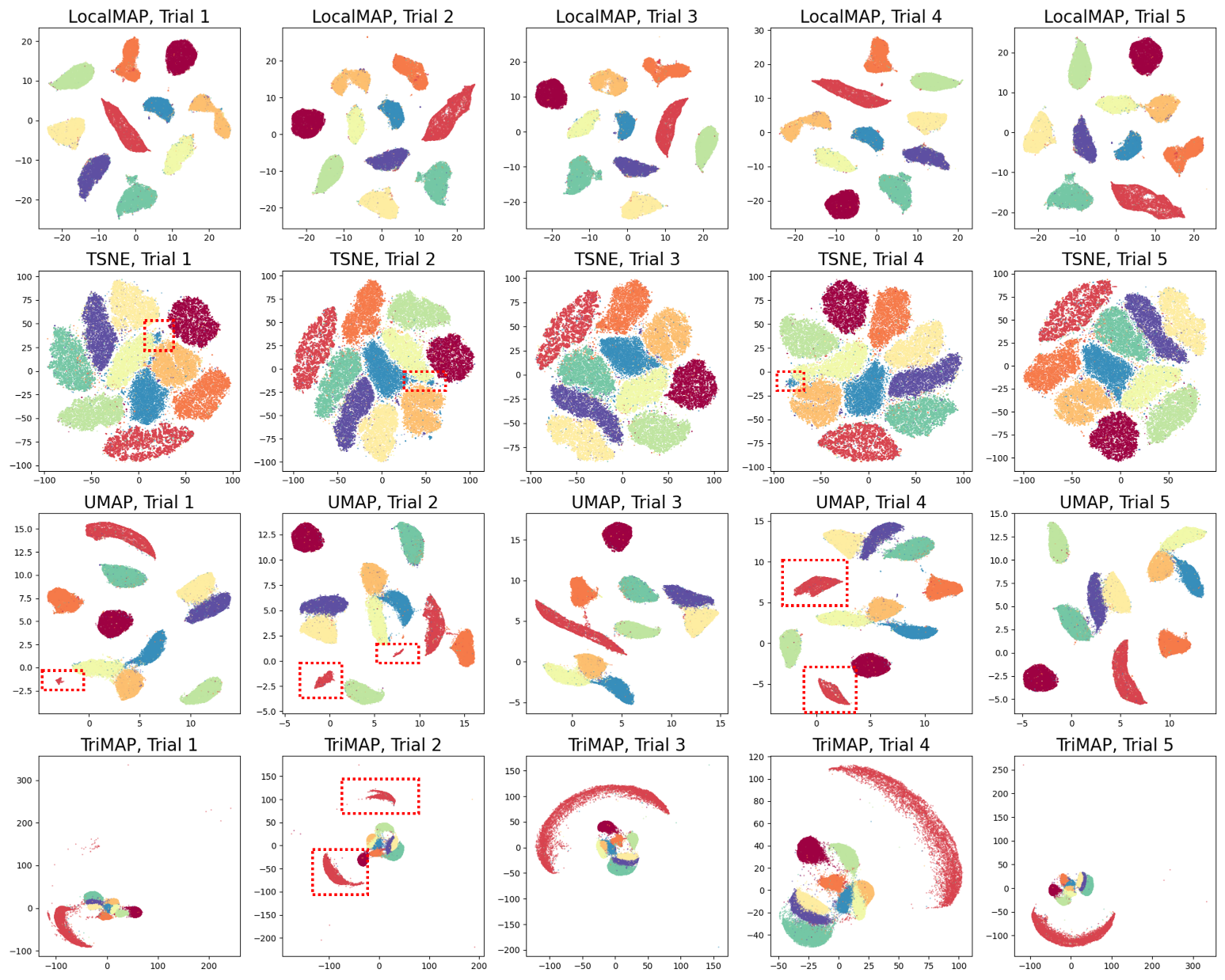}
    \caption{DR embeddings under different initializations for MNIST \citep{lecun2010mnist}; the red dashed boxes represent the broken clusters in the embeddings.}
    \label{fig:sensitivtiy_inti_mnist}
\end{figure}

\clearpage
\section{Comparison with other dimension reduction methods with tuned hyperparameters}
\label{appendix:optimized}

In the main paper, we have shown that LocalMAP can separate clusters better than other approaches. In this section, we compare LocalMAP with its default parameters to t-SNE, UMAP, PHATE and LargeVis with their best parameters. For tuning, we applied grid hyperparameter search, and selected the best hyperparameters among all possible combinations. For t-SNE, we tuned perplexity ([5, 10, 15, 20, 25, 30, 35, 40, 45, 50]) and learning rate ([10, 50, 100, 200, 500, 1000]) based on the suggested range from \citet{gove2022new}. For UMAP, we tuned the number of nearest neighbors ([2, 5, 10, 20, 50, 100, 200]) and the min distance ([0.0, 0.1, 0.25, 0.5, 0.8, 0.99]) based on the official UMAP documentation \citep{UMAP}. For PHATE we used the number of nearest neighbors ([2, 5, 10, 15, 20]) and the decay value ([10, 15, 20, 40, 80, 160]) suggested by the orignal paper\citep{moon2019visualizing}. For LargeVis, we used the range suggested by the original paper \citep{Tang16} and adjusted the perplexity ([10, 50, 100, 200, 500]), the number of times for neighbor propagation (prop) ([1,2,3]), and the weights assigned to negative edges ($\gamma$)([1,3,5,7,9]). The number of neighbors is chosen as three times the perplexity based on the corresponding github document. The weights assigned to negative edges in table \ref{tab:silhouette_highest} show the best parameters we found for each dataset, and table \ref{tab:silhouette_score_optimized} shows the updated comparisons. Based on the table \ref{tab:silhouette_score_optimized}, we can easily observe that LocalMAP can still separate better than the other approaches within most of the datasets. For the COIL20 datasets, we can observe that tuned LargeVis, UMAP, and t-SNE perform slightly better than LocalMAP. However, if we look at the visualizations of these embeddings generated with the tuned hyperparameters in Figure \ref{fig:visualization_comparison_silhouette}, we can easily see that these methods don't provide additional separations for the clusters. Instead, they improve the silhouette score by reducing the intra-cluster distances. Moreover, if we tend to fine-tune the dimension reduction methods to achieve a better performance, it might take more time than using the default hyperparameters, which again proves that LocalMAP are less sensitive to the hyperparameters to achieve a good separation of the clusters.

\begin{table}[ht]
\caption{The best hyperparameters for different dimension reduction methods on different datasets with the highest silhouette score.}
\label{tab:silhouette_highest}
\centering
\renewcommand\arraystretch{1.2}
\begin{tabular}{ccccc}
\hline
 & \textbf{\begin{tabular}[c]{@{}c@{}}t-SNE\\ (perplexity,\\ learning rate)\end{tabular}} & \textbf{\begin{tabular}[c]{@{}c@{}}UMAP\\ (n\_neighbors,\\ min\_dist)\end{tabular}} & \textbf{\begin{tabular}[c]{@{}c@{}}PHATE\\ (k,decay)\end{tabular}} & \textbf{\begin{tabular}[c]{@{}c@{}}LargeVis\\ (perplexity,\\ prop,$\gamma$)\end{tabular}} \\ \hline
\textbf{MNIST} & (50,1000) & (10,0) & (2,15) & (10,2,3) \\
\textbf{FMNIST} & (50,50) & (100,0) & (5,80) & (50,1,3) \\
\textbf{USPS} & (40,500) & (10,10) & (20,160) & (10,2,7) \\
\textbf{COIL20} & (10,500) & (10,0) & (5,160) & (10,1,9) \\
\textbf{20NG} & (40,50) & (100,0.1) & (15,160) & (10,2,1) \\
\textbf{Kang} & (40,500) & (10,0) & (5,20) & (10,1,1) \\
\textbf{Seurat} & (50,200) & (20,0) & (5,160) & (10,1,3) \\
\textbf{Human Cortex} & (45,1000) & (5,0) & (2,160) & (100,2,9) \\
\textbf{CBMC} & (45,500) & (200,0) & (2,80) & (100,1,1) \\ \hline
\end{tabular}
\end{table}

\begin{figure}[ht]
    \centering
    \includegraphics[width=\columnwidth]{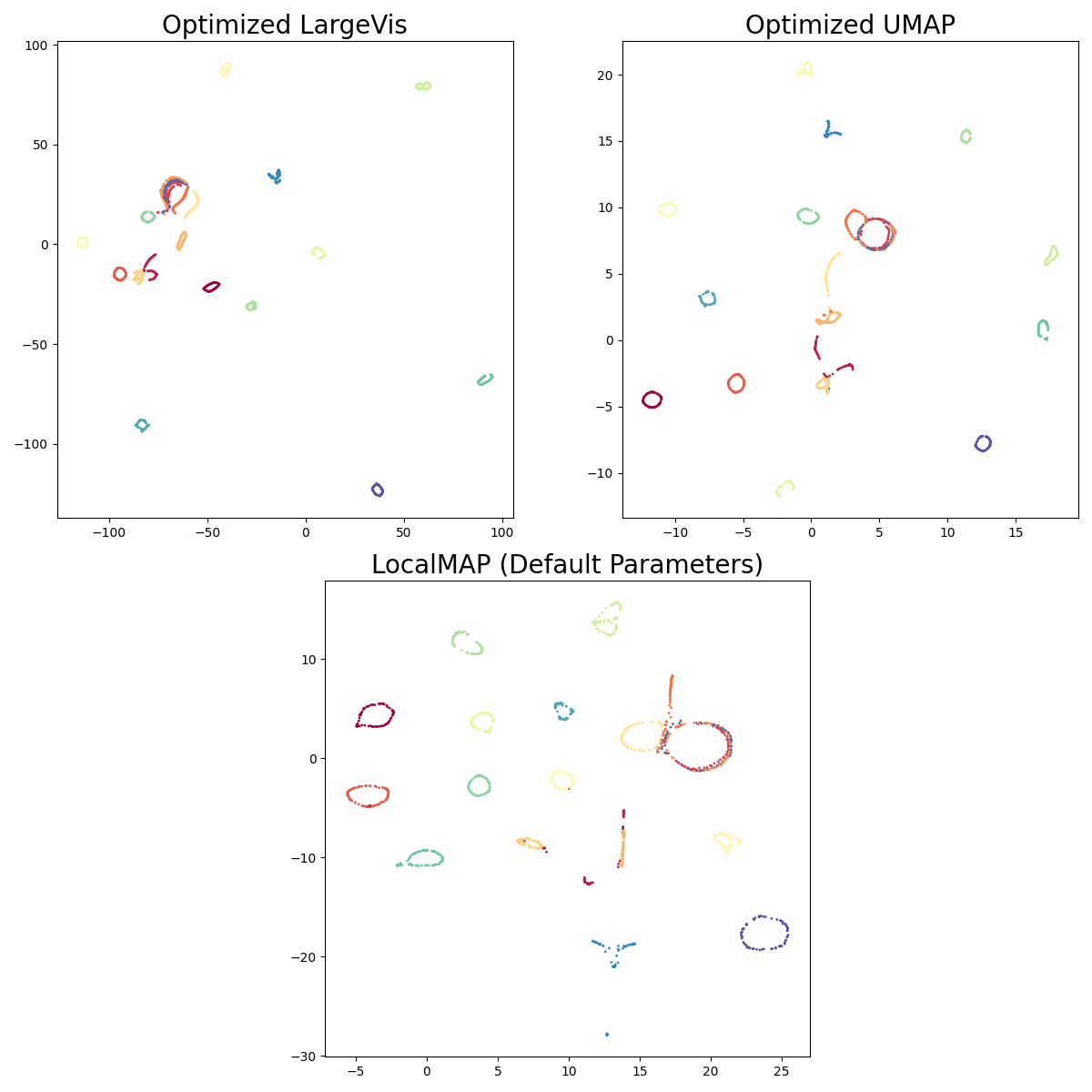}
    \caption{The visualization comparison among LocalMAP, optimized LargeVis and optimzied UMAP.}
    \label{fig:visualization_comparison_silhouette}
\end{figure}

Table \ref{tab:svm_highest} has shown the best hyperparameters for each dimension reduction methods with the highest SVM accuracy within different datasets and table \ref{tab:svm_score_optimized} has shown their corresponding SVM accuracy. Based on the performance of the dimension reduction, we can see that LocalMAP is still comparable with other optimzed DR methods with respect to these scores.

\begin{table}[ht]
\caption{The best hyperparameters for different dimension reduction methods on different datasets with the highest SVM accuracy}
\label{tab:svm_highest}
\centering
\renewcommand\arraystretch{1.2}
\begin{tabular}{ccccc}
\hline
 & \textbf{\begin{tabular}[c]{@{}c@{}}t-SNE\\ (perplexity,\\ learning rate)\end{tabular}} & \textbf{\begin{tabular}[c]{@{}c@{}}UMAP\\ (n\_neighbors,\\ min\_dist)\end{tabular}} & \textbf{\begin{tabular}[c]{@{}c@{}}PHATE\\ (k,decay)\end{tabular}} & \textbf{\begin{tabular}[c]{@{}c@{}}LargeVis\\ (perplexity,\\ prop,$\gamma$)\end{tabular}} \\ \hline
\textbf{MNIST} & (20,200) & (20,0.1) & (5,160) & (10,3,1) \\
\textbf{FMNIST} & (15,1000) & (10,0.1) & (20,80) & (10,3,5) \\
\textbf{USPS} & (20,200) & (10,0) & (20,80) & (10,2,7) \\
\textbf{COIL20} & (5,10) & (5,0.1) & (2,40) & (10,2,9) \\
\textbf{20NG} & (20,1000) & (20,0) & (20,40) & (50,3,3) \\
\textbf{Kang} & (50,10) & (20,0.1) & (10,160) & (100,2,9) \\
\textbf{Seurat} & (50,10) & (50,0.5) & (10,80) & (50,2,5) \\
\textbf{Human Cortex} & (50,10) & (20,0) & (2,80) & (50,2,9) \\
\textbf{CBMC} & (35,50) & (20,0.5) & (2,40) & (100,1,9) \\ \hline
\end{tabular}
\end{table}

\begin{table}[ht]
\centering
\caption{Silhouette scores for different Algorithms. If the method is not labeled as ``Optimized'', then it is using the default hyperparameters. \textbf{Bold} is best, \underline{underline} is not significantly different from best. Each row is an algorithm, each column is a dataset. \text{Red} labels are the ones that have shown significant improvement comparing to LocalMAP}
\label{tab:silhouette_score_optimized}
\renewcommand\arraystretch{1.2}
\scalebox{0.8}{
\begin{tabular}{cccccccccc}
\toprule
 & \textbf{MNIST} & \textbf{FMNIST} & \textbf{USPS} & \textbf{COIL20} & \textbf{20NG} & \textbf{Kang} & \textbf{Seurat} & \begin{tabular}[c]{@{}l@{}}\textbf{Human}\\ \textbf{Cortex} \end{tabular} & \begin{tabular}[c]{@{}l@{}}\textbf{CBMC} \end{tabular} \\
\midrule
\textbf{PCA} & 0.02$\pm$0.00 & -0.03$\pm$0.00 & 0.10$\pm$0.00 & 0.01$\pm$0.00 & -0.19$\pm$0.00 & 0.12$\pm$0.00 & -0.06$\pm$0.00 & -0.08$\pm$0.00 & -0.11$\pm$0.00 \\
\textbf{t-SNE} & 0.35$\pm$0.00 & 0.12$\pm$0.00 & 0.42$\pm$0.00 & 0.41$\pm$0.00 & \underline{-0.11$\pm$0.00} & 0.40$\pm$0.01 & 0.22$\pm$0.01 & 0.11$\pm$0.01 & 0.15$\pm$0.01 \\
\textbf{UMAP} & 0.52$\pm$0.01 & 0.19$\pm$0.01 & 0.53$\pm$0.00 & 0.58$\pm$0.01 & -0.15$\pm$0.01 & 0.51$\pm$0.00 & 0.30$\pm$0.00 & 0.12$\pm$0.02 & \underline{0.22$\pm$0.00} \\
\textbf{PaCMAP} & {0.54$\pm$0.01} & \textbf{0.19$\pm$0.00} & \underline{0.56$\pm$0.00} & 0.51$\pm$0.02 & \underline{-0.11$\pm$0.01} & 0.53$\pm$0.00 & \underline{0.31$\pm$0.00} & \underline{0.13$\pm$0.01} & \underline{0.22$\pm$0.00} \\
\textbf{LargeVis} & 0.49$\pm$0.05 & 0.11$\pm$0.03 & 0.41$\pm$0.12 & 0.38$\pm$0.01 & -0.13$\pm$0.01 & 0.44$\pm$0.01 & 0.25$\pm$0.01 & 0.10$\pm$0.02 & 0.17$\pm$0.00 \\
\textbf{TriMAP} & 0.41$\pm$0.00 & \underline{0.17$\pm$0.00} & 0.48$\pm$0.00 & 0.47$\pm$0.00 & -0.13$\pm$0.00 & \underline{0.55$\pm$0.00} & \textbf{0.32$\pm$0.00} & 0.07$\pm$0.00 & 0.21$\pm$0.00 \\
\textbf{PHATE} & 0.26$\pm$0.02 & 0.11$\pm$0.01 & 0.27$\pm$0.01 & 0.33$\pm$0.00 & -0.21$\pm$0.01 & 0.48$\pm$0.02 & 0.27$\pm$0.01 & -0.09$\pm$0.01 & 0.06$\pm$0.01 \\
\textbf{HNNE} & 0.21$\pm$0.03 & 0.06$\pm$0.04 & 0.23$\pm$0.00 & 0.03$\pm$0.00 & -0.34$\pm$0.03 & 0.39$\pm$0.06 & -0.00$\pm$0.03 & -0.09$\pm$0.06 & 0.12$\pm$0.05 \\
\textbf{Neg-t-SNE} & 0.48$\pm$0.00 & \textbf{0.19$\pm$0.00} & 0.48$\pm$0.00 & 0.44$\pm$0.01 & \underline{-0.11$\pm$0.00} & 0.53$\pm$0.00 & \textbf{0.32$\pm$0.00} & 0.12$\pm$0.00 & \textbf{0.24$\pm$0.00} \\
\textbf{NCVis} & 0.38$\pm$0.02 & \textbf{0.19$\pm$0.00} & 0.44$\pm$0.00 & 0.53$\pm$0.00 & -0.15$\pm$0.00 & 0.51$\pm$0.00 & 0.27$\pm$0.00 & 0.10$\pm$0.00 & 0.20$\pm$0.00 \\
\textbf{InfoNC-t-SNE} & 0.33$\pm$0.00 & 0.13$\pm$0.00 & 0.37$\pm$0.00 & 0.43$\pm$0.01 & \underline{-0.11$\pm$0.00} & 0.46$\pm$0.00 & 0.26$\pm$0.00 & 0.10$\pm$0.00 & 0.21$\pm$0.00 \\
\textbf{Optimized LargeVis} & \underline{0.56$\pm$0.01} & \textbf{0.19$\pm$0.00} & 0.55$\pm$0.03 & \textcolor{red}{\textbf{0.65$\pm$0.03}} & -0.12$\pm$0.01 & 0.47$\pm$0.01 & 0.27$\pm$0.02 & 0.12$\pm$0.00 & \underline{0.22$\pm$0.00}\\
\textbf{Optimized PHATE} & 0.34$\pm$0.02 & 0.12$\pm$0.00 & 0.55$\pm$0.00 & 0.33$\pm$0.00 & -0.18$\pm$0.01 & 0.51$\pm$0.01 & 0.27$\pm$0.01 & -0.01$\pm$0.00 & 0.10$\pm$0.02\\
\textbf{Optimized t-SNE} & 0.39$\pm$0.00 & 0.14$\pm$0.00 & 0.43$\pm$0.01 & 0.51$\pm$0.00 & \underline{-0.11$\pm$0.00} & 0.43$\pm$0.02 & 0.23$\pm$0.00 & 0.13$\pm$0.01 & 0.18$\pm$0.01\\
\textbf{Optimized UMAP} & \textbf{0.58$\pm$0.00} & \textbf{0.19$\pm$0.00} & \textbf{0.60$\pm$0.00} & \textcolor{red}{\underline{0.63$\pm$0.00}} & -0.13$\pm$0.01 & 0.54$\pm$0.01 & \underline{0.31$\pm$0.00} 
& \textbf{0.14$\pm$0.00} & \underline{0.22$\pm$0.00}\\
\textbf{LocalMAP} & \textbf{0.58$\pm$0.00} & \textbf{0.19$\pm$0.00} & \textbf{0.60$\pm$0.00} & 0.56$\pm$0.01 & \textbf{-0.10$\pm$0.00} & \textbf{0.60$\pm$0.00} & \textbf{0.32$\pm$0.00} & \textbf{0.14$\pm$0.00} & \underline{0.22$\pm$0.00} \\
\bottomrule
\end{tabular}}
\end{table}

\begin{table}[ht]
\caption{SVM Score for Different Algorithms. If the method is not labeled as ``Optimized'', then it is using the default hyperparameters. \textbf{Bold} is best, \underline{underline} is not significantly different from best (with only 1\% difference). Each row is an algorithm, each column is a dataset.}
\label{tab:svm_score_optimized}
\centering
\renewcommand\arraystretch{1.2}
\scalebox{0.8}{
\begin{tabular}{cccccccccc}
\toprule
 & \textbf{MNIST} & \textbf{FMNIST} & \textbf{USPS} & \textbf{COIL20} & \textbf{20NG} & \textbf{Kang} & \textbf{Seurat} & \begin{tabular}[c]{@{}l@{}}\textbf{Human}\\ \textbf{Cortex} \end{tabular} & \begin{tabular}[c]{@{}l@{}}\textbf{CBMC} \end{tabular} \\
\midrule
\textbf{PCA} & 0.47$\pm$0.00 & 0.55$\pm$0.00 & 0.56$\pm$0.00 & 0.66$\pm$0.00 & 0.15$\pm$0.00 & 0.73$\pm$0.00 & 0.46$\pm$0.00 & 0.57$\pm$0.00 & 0.44$\pm$0.00 \\
\textbf{t-SNE} & \textbf{0.97$\pm$0.00} & \underline{0.74$\pm$0.00} & \textbf{0.96$\pm$0.00} & 0.85$\pm$0.01 & 0.45$\pm$0.01 & \underline{0.95$\pm$0.00} & \underline{0.84$\pm$0.00} & \textbf{0.82$\pm$0.00} & 0.82$\pm$0.00 \\
\textbf{UMAP} & \textbf{0.97$\pm$0.00} & \underline{0.74$\pm$0.01} & \underline{0.95$\pm$0.00} & 0.82$\pm$0.01 & 0.44$\pm$0.01 & \underline{0.95$\pm$0.00} & 0.83$\pm$0.00 & \underline{0.81$\pm$0.00} & \underline{0.82$\pm$0.00} \\
\textbf{PaCMAP} & \textbf{0.97$\pm$0.00} & \underline{0.74$\pm$0.00} & \underline{0.95$\pm$0.00} & 0.83$\pm$0.01 & \underline{0.46$\pm$0.01} & \underline{0.95$\pm$0.00} & \textbf{0.85$\pm$0.00} & \underline{0.81$\pm$0.00} & \textbf{0.83$\pm$0.00} \\
\textbf{LargeVis} & \underline{0.96$\pm$0.00} & \underline{0.74$\pm$0.01} & 0.92$\pm$0.06 & 0.80$\pm$0.02 & \textbf{0.47$\pm$0.00} & \underline{0.95$\pm$0.00} & \underline{0.84$\pm$0.00} & \textbf{0.82$\pm$0.00} &\underline{0.82$\pm$0.00} \\
\textbf{TriMAP} & \underline{0.96$\pm$0.00} & 0.73$\pm$0.00 & \underline{0.95$\pm$0.00} & 0.77$\pm$0.01 & 0.42$\pm$0.01 & \underline{0.95$\pm$0.00} & \underline{0.84$\pm$0.00} & 0.79$\pm$0.00 & \underline{0.82$\pm$0.00} \\
\textbf{PHATE} & 0.86$\pm$0.02 & 0.66$\pm$0.01 & 0.86$\pm$0.01 & 0.84$\pm$0.00 & 0.33$\pm$0.01 & 0.92$\pm$0.00 & 0.77$\pm$0.00 & 0.70$\pm$0.01 & 0.72$\pm$0.01 \\
\textbf{HNNE} & 0.84$\pm$0.03 & 0.68$\pm$0.01 & 0.82$\pm$0.00 & 0.63$\pm$0.00 & 0.24$\pm$0.05 & 0.90$\pm$0.01 & 0.74$\pm$0.01 & 0.68$\pm$0.03 & 0.73$\pm$0.04 \\
\textbf{Neg-t-SNE} & \underline{0.96$\pm$0.00} & \underline{0.74$\pm$0.00} & 0.93$\pm$0.00 & 0.81$\pm$0.01 & 0.43$\pm$0.01 & \underline{0.95$\pm$0.00} & \underline{0.84$\pm$0.00} & \underline{0.81$\pm$0.00} & \underline{0.82$\pm$0.00} \\
\textbf{NCVis} & 0.94$\pm$0.01 & 0.73$\pm$0.00 & 0.92$\pm$0.00 & 0.79$\pm$0.00 & 0.36$\pm$0.01 & 0.94$\pm$0.00 & 0.83$\pm$0.00 & \textbf{0.82$\pm$0.00} & \underline{0.82$\pm$0.00} \\
\textbf{InfoNC-t-SNE} & \underline{0.96$\pm$0.00} & \underline{0.74$\pm$0.00} & 0.93$\pm$0.00 & 0.82$\pm$0.01 & 0.42$\pm$0.00 & \underline{0.95$\pm$0.00} & \textbf{0.85$\pm$0.00} &\underline{0.81$\pm$0.00} & \textbf{0.83$\pm$0.00} \\
\textbf{LocalMAP} & \textbf{0.97$\pm$0.00} & \textbf{0.75$\pm$0.00} & \textbf{0.96$\pm$0.00} & 0.83$\pm$0.01 & \underline{0.46$\pm$0.01} & \textbf{0.96$\pm$0.00} & \underline{0.84$\pm$0.00} & \underline{0.81$\pm$0.00} & \underline{0.82$\pm$0.00} \\
\textbf{Optimized LargeVis} &  \textbf{0.97$\pm$0.00}  & \underline{0.74$\pm$0.01} & \textbf{0.96$\pm$0.00} & 0.85$\pm$0.01 & \underline{0.46$\pm$0.00} & \underline{0.95$\pm$0.00} & \underline{0.84$\pm$0.00} & \textbf{0.82$\pm$0.01} & \underline{0.82$\pm$0.00}\\
\textbf{Optimized t-SNE} & \textbf{0.97$\pm$0.00} & \textbf{0.75$\pm$0.00} & \textbf{0.96$\pm$0.00} & \underline{0.92$\pm$0.02} & \underline{0.46$\pm$0.00} & \underline{0.95$\pm$0.00} & \underline{0.84$\pm$0.01} & \textbf{0.82$\pm$0.00} & \underline{0.82$\pm$0.01} \\
\textbf{Optimized UMAP} & \textbf{0.97$\pm$0.00} & \textbf{0.75$\pm$0.00} & \textbf{0.96$\pm$0.00} & 0.89$\pm$0.00 & 0.44$\pm$0.01 & \underline{0.95$\pm$0.00} & \underline{0.84$\pm$0.00} & \textbf{0.82$\pm$0.00} & \underline{0.82$\pm$0.00}\\
\textbf{Optimized PHATE} & 0.89$\pm$0.01 & 0.68$\pm$0.02 & 0.86$\pm$0.00 & \textbf{0.93$\pm$0.01} & 0.37$\pm$0.00 & 0.92$\pm$0.01 & 0.77$\pm$0.00 & 0.71$\pm$0.01 & 0.72$\pm$0.00 \\
\textbf{LocalMAP} & \textbf{0.97$\pm$0.00} &\textbf{0.75$\pm$0.00} & \textbf{0.96$\pm$0.00} & 0.83$\pm$0.01 & \underline{0.46$\pm$0.01} & \textbf{0.96$\pm$0.00} & \underline{0.84$\pm$0.00} & \underline{0.81$\pm$0.00} & \underline{0.82$\pm$0.00} \\
\bottomrule
\end{tabular}}
\end{table}

\clearpage
\section{Scalability of LocalMAP under Large Datasets}

In this section, we have added two large single-cell datasets with more than 1 million cells within each dataset to show the scalability of our model. The data description of the extended dataset within our model has already been shown in Table \ref{tab:large_tab}. The biological data sets are processed according to the same method mentioned in Section \ref{sec:experiment}, and the detailed embeddings using PaCMAP and LocalMAP are shown in Figure \ref{fig:large_scale}, which proves that LocalMAP shows good separation even in large-scale settings.

\begin{table}[ht]
\centering
\caption{Data Description with Large Scale Dataset over 1 million samples}
\label{tab:large_tab}
\begin{tabular}{ccc}
\hline
\textbf{Dataset} & \textbf{\# of samples} & \textbf{\# of dimensions} \\ \hline
PBMC 1M\cite{perez2022single} & 1,263,676 & 1,000 \\
AIDA\cite{kock2024single} & 1,058,909 & 1,000 \\\hline
\end{tabular}
\end{table}

\begin{figure}[ht]
    \centering
    \includegraphics[width=0.7\linewidth]{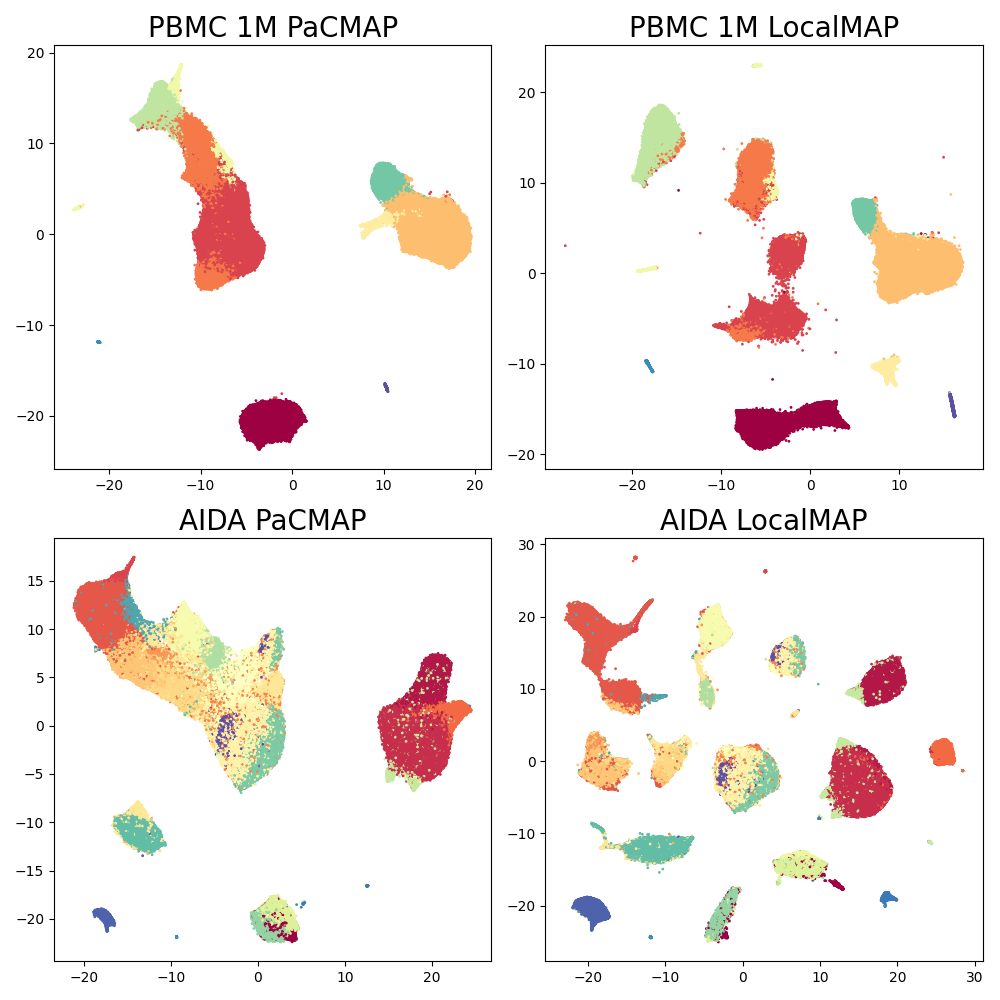}
    \caption{The performance of PaCMAP and LocalMAP under large scale settings}
    \label{fig:large_scale}
\end{figure}

\end{document}